\documentclass[11pt]{article}
\def\shownotes{1}

\usepackage[final]{acl}

\usepackage{natbib}
\usepackage{times}
\usepackage{latexsym}

\usepackage[T1]{fontenc}

\usepackage[utf8]{inputenc}

\usepackage{inconsolata}
\usepackage{macro}
\usepackage{graphicx}
\usepackage{amsmath}
\usepackage{times}
\usepackage{latexsym}
\usepackage{makecell}
\usepackage{balance}

\usepackage{caption}
\usepackage{subcaption}
\usepackage{booktabs}
\usepackage{algorithm,algorithmicx,algpseudocode}
\usepackage{enumitem}
\usepackage{multirow}
\makeatletter
\def\thanksnosymbol#1{\protected@xdef\@thanks{\@thanks
        \protect\footnotetext{#1}}}
\makeatother
\usepackage{hyperref}
\newcommand{\acronym}{{\sc{MoPLEx}}}

\title{MoPLEx: Estimating Plackett-Luce Mixture Models for Multi-Objective Alignment}

\author{Dongyue Li\\
Northeastern University, Boston, MA\\
\texttt{li.dongyu@northeastern.edu}\And 
Ziniu Zhang\\
Northeastern University, Boston, MA\\
\texttt{zhang.zini@northeastern.edu} \AND
Lu Wang\\
University of Michigan, Ann Arbor, MI\\
\texttt{wangluxy@umich.edu} \And 
Hongyang R. Zhang\\
Northeastern University, Boston, MA\\
\texttt{ho.zhang@northeastern.edu}
}

\begin{document}
\maketitle

\begin{abstract}
We study learning a mixture of $k$ Plackett-Luce models from multi-way ranking responses from annotators that may represent heterogeneous underlying preferences. This problem has many applications in AI alignment and preference optimization. Prior work has studied mixtures of Bradley–Terry models from pairwise comparisons. However, estimating a mixture of multi-way ranking models can become theoretically unidentifiable when $k$ exceeds $m/2$, where $m$ is the ranking length. We design an efficient algorithm to address this issue by first augmenting the rankings to a larger size (e.g., generating comparisons from a base model), followed by a gradient-based estimation to reduce inference cost (in the input embedding space). With this procedure in mind, we then fit a mixture of Plackett-Luce (PL) models via an expectation-maximization-style iteration, or \acronym{} in short. We conduct extensive experiments to verify this algorithm. First, we find that the gradient-based approximation estimates true probabilities with less than 5\% error on models with up to 34 billion parameters. Second, \acronym{} improves clustering and ranking accuracy by an average of 43.7\% and 15.2\% over baselines using a single PL model or a mixture of Bradley-Terry models, on UltraFeedback and PERSONA datasets. These results demonstrate the effectiveness of \acronym{} for tackling multi-way rankings following heterogeneous preferences through measuring alignment via gradients. 
\end{abstract}

\section{Introduction}

We study language model alignment given preference data drawn from several heterogeneous subpopulations. The central challenge is to identify the clustering structures and learn a corresponding mixture of ranking models \cite{zhao2016learning,awasthi2014learning}. This problem arises in scenarios when preferences are collected from diverse annotator subpopulations or scored along several evaluation criteria \cite{kirk2024prism,castricato2025persona}. These different subgroups can rank candidate responses differently, yielding a mixture of rankings that a single ranking model cannot accurately capture \cite{chakraborty2024maxmin}. In this paper, we tackle this problem by designing a new algorithm to learn a mixture of ranking models using a gradient-based estimation procedure.

One natural approach is to extend the Bradley-Terry (BT) model used in RLHF and DPO~\cite{dpo23} to a mixture model. For example, recent work trains a mixture of BT models by regularizing mixture weights to a uniform prior \cite{shen2025micro}, or by using an expectation-maximization algorithm to alternate between cluster assignment and model optimization \cite{chakraborty2024maxmin}. However, these methods apply to rankings of two choices (i.e., pairwise comparisons), leaving learning mixtures on longer rankings open. While BT models can be generalized to Plackett-Luce (PL) models \cite{dpo23}, existing methods focus on fitting single models rather than mixtures.

\begin{figure*}[t!]
    \centering
    \includegraphics[width=0.99\textwidth]{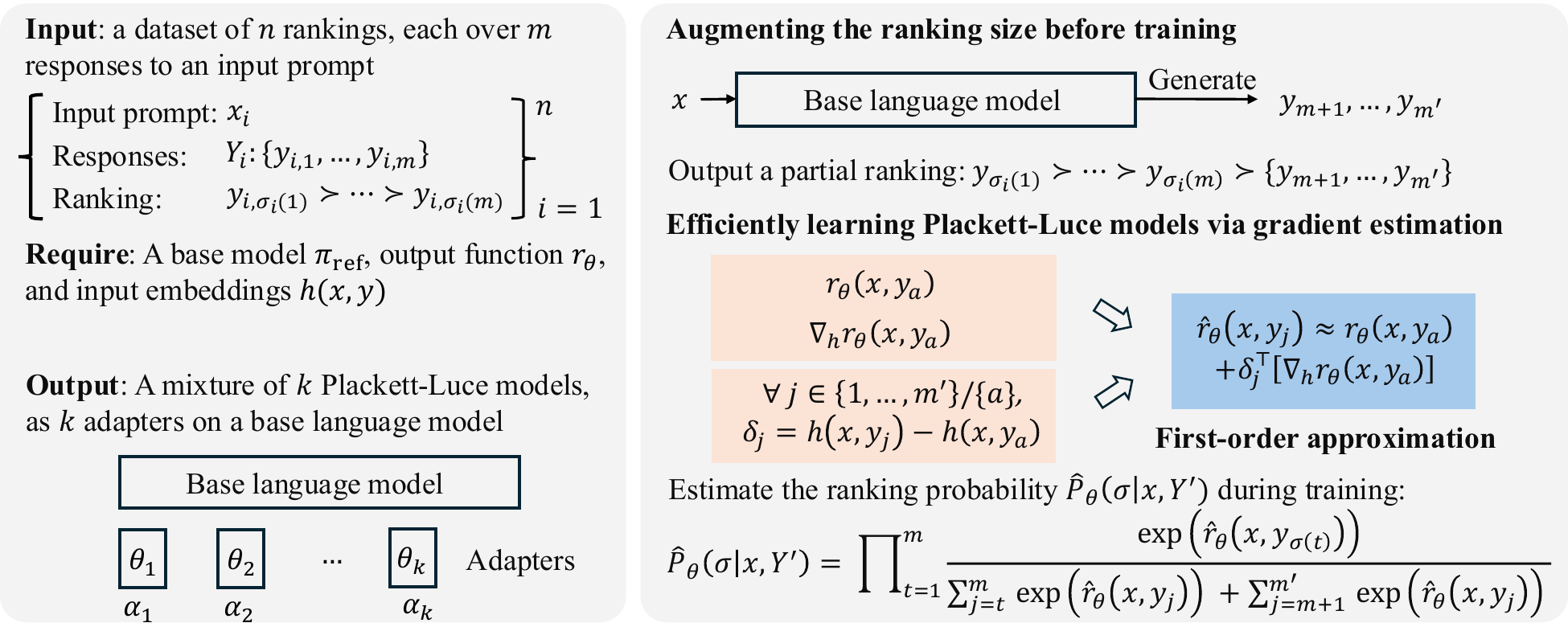}
    \caption{\textbf{Left}: Given $n$ rankings over $m$ candidate responses per prompt, we aim to learn a mixture of $k$ Plackett-Luce models, implemented via an ensemble of $k$ adapters on a base language model. 
    We identify an unidentifiability problem: when the ranking length is small (specifically, when $m \le 2k - 1$), there exist two distinct mixture models that yield the same ranking distribution, making the true clusters unidentifiable. \textbf{Right}: We propose an algorithm to address this problem. First, we augment the ranking length by generating additional responses from the base model, outputting a larger partial ranking.
    Second, to reduce the computational cost, we estimate the model outputs on responses via a first-order approximation in the input embedding space. This computes outputs and gradients for a small number of $a$ anchor responses, reducing the runtime and memory cost from $O(km)$ to $O(ka)$.%
    }\label{fig_overview}
\end{figure*}

A key problem in this setting is identifiability, where two distinct mixture models can produce the same ranking probability distributions, making it hard to identify the true clusters. For example, consider ranking four responses drawn equally from two true clusters: $A \succ B \succ C \succ D$ and $B \succ A \succ D \succ C$. Observing only rankings of length $m \le 3$ yields the same marginal distributions as another clustering ($A \succ B \succ D \succ C$ and $B \succ A \succ C \succ D$), as shown in Figure \ref{fig_unidentifiability}. 
We then formulate a broader pattern that a mixture of $k$ models is unidentifiable when the ranking length $m$ is short. 
Our synthetic experiments show that a mixture of BT models collapses to random guessing when $k>2$, whereas increasing the ranking size beyond $2k$ recovers the true clusters.

This work proposes an algorithm that overcomes this limitation. First, to address insufficient ranking lengths, we augment the ranking length $m$ by generating additional candidate responses using the base language model before training.
Second, training the $k$-mixture ranking models over $m$ candidates requires $O(km)$ forward passes per iteration, which is computationally expensive. We design a gradient-based estimation procedure to reduce the cost to $O(ka)$ with $a$ typically three times smaller than $m$ in practice. Our idea is to first compute exact outputs and gradients for the $a$ anchor responses. Then, we can estimate outputs for the remaining responses using a first-order approximation in the input embedding space, without performing full model computation. 
Taken together, we design an algorithm named \acronym{} to efficiently scale the learning of mixtures of Plackett-Luce models on top of any base model. See Figure \ref{fig_overview} for an illustration of our approach.

We extensively evaluate our approach across a range of preference optimization datasets and language models. First, our gradient estimation method approximates full model outputs with under \textbf{5}\% error on models with up to 34 billion parameters. Second, for datasets involving multiple evaluation criteria and annotator subpopulations, our algorithm improves clustering and ranking accuracy by \textbf{43.7}\% and \textbf{15.2}\%, respectively, compared to baselines that use a single ranking model or a mixture of BT models. Furthermore, our method reduces GPU runtime and memory costs by up to \textbf{3}$\times$ compared to full computation of Plackett-Luce models. By increasing ranking lengths three times, our method yields a \textbf{4.6}\% performance gain over full models on the original short rankings. The code for replicating these empirical findings is available at \href{https://github.com/VirtuosoResearch/MoPLEx-implementation}{https://github.com/VirtuosoResearch/MoPLEx-implementation}.

\section{Preliminaries} 

We study learning a mixture of Plackett-Luce models from a ranking dataset. In the context of text data, we are given rankings over text responses for an input prompt. Our input is a dataset $\mathcal{D} = \{(x_i, Y_i, \sigma_i)\}_{i=1}^n$, where each instance comprises a prompt $x$, a set of $m$ candidate responses $Y = \{y_{1}, \dots, y_{m}\}$, and a multiway ranking $\sigma$. This ranking is a permutation over $[m]$ that defines an order $y_{\sigma(1)} \succ y_{\sigma(2)} \succ \dots \succ y_{\sigma(m)}$, where $\sigma(t)$ denotes the index of the response at rank $t$.

In practice, rankings are often gathered from diverse annotator populations or evaluation criteria. This can yield different rankings over the same candidate responses where a single ranking model cannot adequately capture \cite{chakraborty2024maxmin,shen2025micro}.
We illustrate an example in Figure \ref{fig_illustration_of_ranking_differences}. 
Thus, we model such ranking datasets by a mixture of $k$ Plackett-Luce ranking models, each corresponding to a cluster, where $k$ is treated as a hyperparameter. Our objective is to identify the cluster assignment for each ranking and to learn the corresponding ranking models.

\begin{figure}[t!]
    \centering
    \begin{minipage}[b]{0.24\textwidth}
        \centering
        \includegraphics[width=\textwidth]{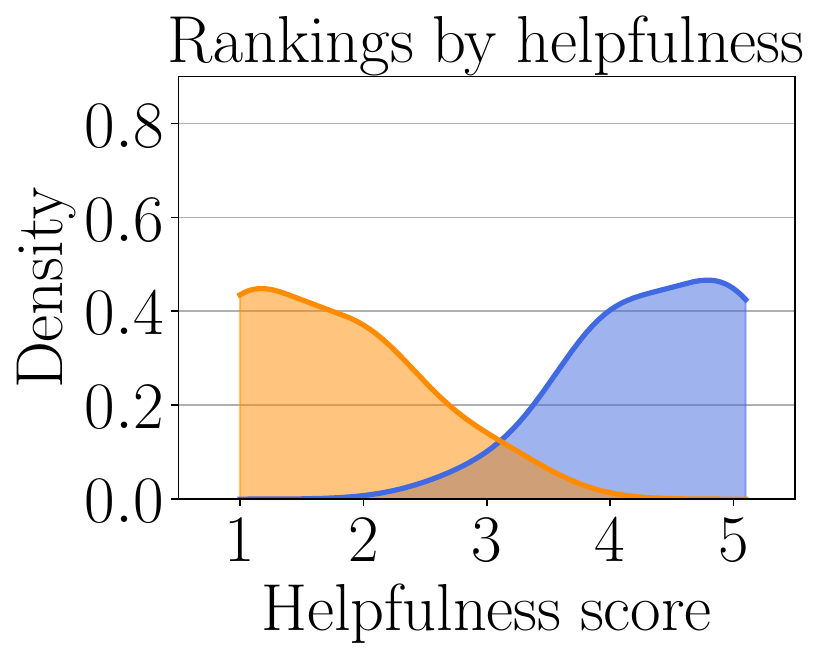}
    \end{minipage}\hfill
    \begin{minipage}[b]{0.24\textwidth}
        \centering
        \includegraphics[width=\textwidth]{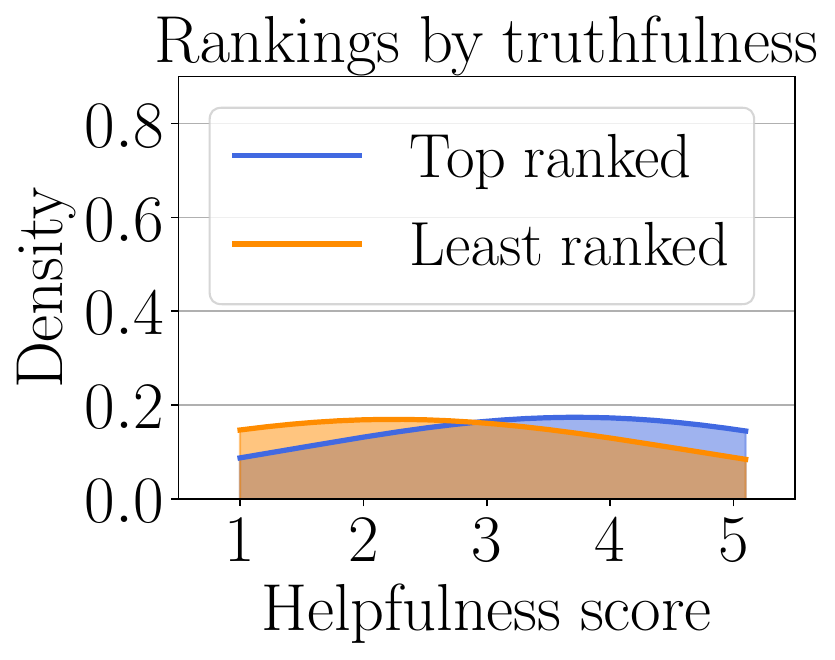}
    \end{minipage}
    \caption{We plot the helpfulness scores for the top and bottom-ranked responses from the UltraFeedback dataset. \textbf{Left}: Responses ranked by helpfulness exhibit clear score separation. \textbf{Right}: Rankings based on truthfulness are nearly indistinguishable using helpfulness scores. Thus, a single ranking model cannot capture rankings given by various evaluation criteria.
    }
    \label{fig_illustration_of_ranking_differences}
\end{figure}

\begin{figure*}[t!]
    \centering
    \begin{minipage}[t!]{0.7\textwidth}
        \centering
        \includegraphics[width=\textwidth]{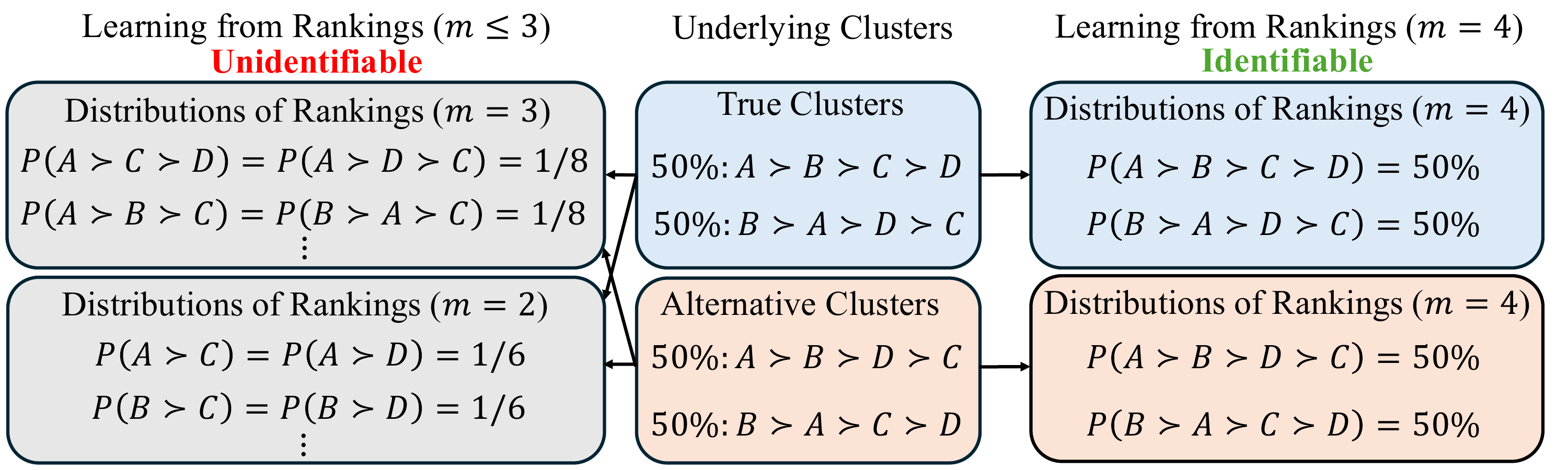}
        \caption{ We illustrate the unidentifiability limit of a mixture model with $k = 2$. \textbf{Left}: If the given dataset contains rankings of length $m = 3$ (or $m = 2$), two possible cases of underlying clusters can yield the same marginal distributions. The mixture model is mathematically incapable of identifying the true clusters. 
    \textbf{Right}: If the given dataset contains rankings of length $m = 4$, the two sets of clusters yield distinct distributions, and the mixture model can identify the correct underlying clusters. 
    }\label{fig_unidentifiability}
    \end{minipage}\hfill
    \begin{minipage}[t!]{0.28\textwidth}
        \centering
        \includegraphics[width=0.9\textwidth]{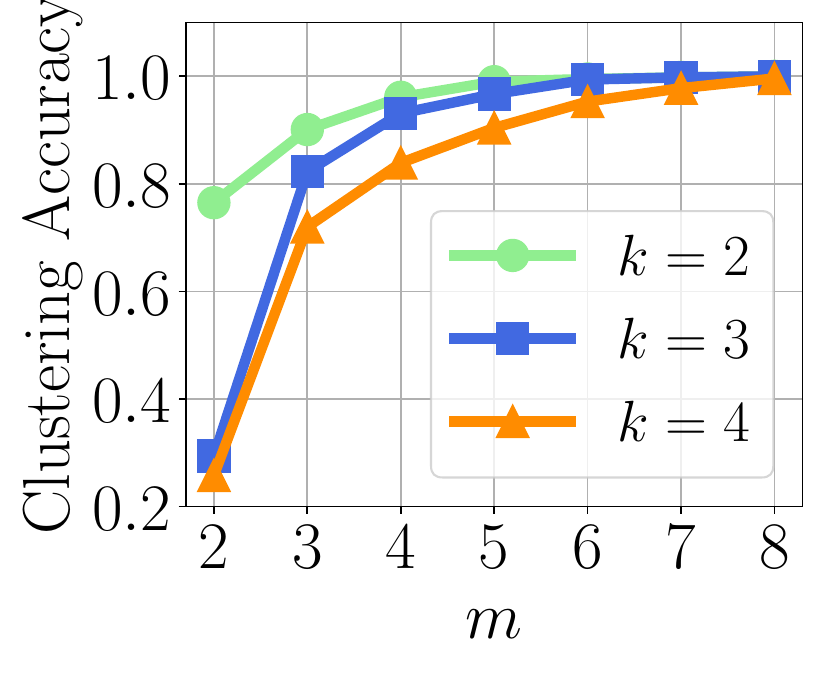}
        \caption{We validate the unidentifiability of the mixture of PL models in a synthetic experiment, with varying numbers of clusters $k$. We find that for a small $m$, the underlying clusters cannot be fully recovered.} \label{fig_ranking_accuracy_varying_m_k}
    \end{minipage}
\end{figure*}

\paragraph{Mixtures of Plackett-Luce models.}  Let $z_i \in \{1, \dots, k\}$ denote the cluster assignment for example $i$, and let $\alpha_c$ be the prior probability of cluster $c$, where $\sum_{c=1}^k \alpha_c = 1$. For a given cluster $c$, the likelihood of observing a specific ranking $\sigma_i$ is given by a ranking model $P_{\theta_c}(\sigma_i \mid x_i, Y_i, z_i=c)$. The objective is to learn the model parameters $\set{\theta_c}^k_{c=1}$ and mixing weights $\set{\alpha_c}^k_{c=1}$ by maximizing the log-likelihood:
\begin{align}\label{eq_mixture_of_ranking}
    \sum_{i=1}^n \log \left( \sum_{c=1}^k \alpha_c \cdot P_{\theta_c}(\sigma_i \mid x_i, Y_i, z_i = c) \right).
\end{align}
One natural approach to estimate the mixture model is via the expectation-maximization algorithm. Each ranking $\sigma_i$ is assigned to the cluster that yields the highest posterior probability.

The Plackett-Luce (PL) model is commonly used to compute the probability of a ranking. 
Let $r_{\theta_c}(x, y)$ be the score assigned by the model for cluster $c$. $P_{\theta_c}(\sigma_i \mid x_i, Y_i, z_i = c)$ is defined as:
\begin{align}\label{eq_mixture_of_PL_prob}
\prod_{t=1}^{m-1} \frac{\exp\left( r_{\theta_c}(x_i, y_{i, \sigma_i(t)}) \right)}{\sum_{l=t}^m \exp\left( r_{\theta_c}(x_i, y_{i, \sigma_i(l)}) \right)}.
\end{align}
Substituting \eqref{eq_mixture_of_PL_prob} into Equation \eqref{eq_mixture_of_ranking} yields the mixture of PL models. 
The mixture of BT models is a special case when $m=2$. Extensions to other ranking models are discussed in Appendix \ref{sec_extension}.

In practice, we can fine-tune a language model to parameterize the score $r_{\theta_c}(x, y)$. For example, following DPO \cite{dpo23}, we can define it as the scaled log-probability ratio:
\begin{align}
r_{\theta_c}(x, y) := \beta \log \frac{\pi_{\theta_c}(y \mid x)}{\pi_{\text{ref}}(y \mid x)},\label{eq_definition_of_score}    
\end{align}
where  $\pi_{\text{ref}}$ is a base reference model and $\beta$ is a hyperparameter controlling the KL-divergence.

\paragraph{Problem statement.} Given an input of $n$ rankings, each over $m$ candidate responses to an input prompt, we aim to fit a mixture of $k$ Plackett-Luce models on top of these rankings (each corresponding to a cluster), with mixture weights of clusters $\{\alpha_c\}_{c=1}^k$ and model parameters of each cluster $\{\theta_c\}_{c=1}^k$. 
In particular, the estimation problem involves 1) identifying the cluster assignment for each ranking, and 2) estimating the corresponding Plackett-Luce (PL) model for each group, and its mixture weight.
Within each PL model, we will assume that the score of each input ranking is computed according to Equations \eqref{eq_mixture_of_PL_prob} and \eqref{eq_definition_of_score}.

\section{Unidentifiability of Mixture Models} \label{sec_unidentifiability}

This section presents a limitation that the mixture of Plackett-Luce (PL) models is unidentifiable when the size $m$ is small. Unidentifiability occurs when two distinct sets of mixtures produce the same ranking distributions, making it impossible to recover the true clusters (See Definition \ref{def_identifiability}). This limits the ability of mixture Bradley-Terry models to identify multiple clusters.

Figure \ref{fig_unidentifiability} illustrates this limitation using four candidate responses ($A, B, C, D$). Suppose the true distribution consists of two equal clusters: $A \succ B \succ C \succ D$ and $B \succ A \succ D \succ C$. If we only observe rankings of length $m \leq 3$, this setup yields the same marginal distributions as an alternative clustering ($A \succ B \succ D \succ C$ and $B \succ A \succ C \succ D$). Thus, it is difficult to distinguish the true clusters from alternative ones. In contrast, learning from rankings of length $m=4$ can identify the clusters correctly.

Building on the insights from \citet{zhao2016learning}, we note that the unidentifiability of the mixture of PL models depends on the cluster number $k$ and the ranking size $m$. 

\begin{proposition}[Slightly adapted from \citet{zhao2016learning}]
\label{prop_non_identifiability_mixtures}
For a mixture of $k$ PL models defined over $m$-way rankings, when $m \le 2k - 1$, the model is non-identifiable. That is, there exist two sets of distinct weights and parameters that induce an identical distribution over the space of $m$-way rankings.
\end{proposition}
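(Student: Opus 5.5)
Following \citet{zhao2016learning}, the plan is to reduce the question to parameter counting for one explicit construction in which the outcome of the ranking is decided by a single scalar per component.

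\textbf{Setup.} A PL model on $m$ alternatives is determined by positive weights $\gamma=(\gamma_1,\dots,\gamma_m)$, taken up to scaling, where $\gamma_j = \exp(r_\theta(x,y_j))$. We work in this weight space and assume the score parameterization $r_{\theta_c}$ is rich enough to realize any weight vector. The mixture probability of a ranking $\sigma$ is then
\[
\sum_{c=1}^k \alpha_c \prod_{t=1}^{m-1} \frac{\gamma^{(c)}_{\sigma(t)}}{\sum_{l\ge t}\gamma^{(c)}_{\sigma(l)}} .
\]
It suffices to find two distinct mixtures, meaning they differ as multisets of (weight, component) pairs, that assign equal probability to every $\sigma$.

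\textbf{Construction.} Fix all alternatives except one to have equal weight $1$, and give the remaining alternative weight $s>0$. Under such a component, the probability of $\sigma$ depends only on the position $p$ at which the special alternative appears. Before it appears, every selection is among $j$ remaining alternatives: $m-t+1-1$ of weight $1$ plus the special one. A direct computation gives
\[
P(\sigma)=\frac{s}{\prod_{j=m-p+1}^{m}(j-1+s)},
\]
a rational function $f_p(s)$ for $p=1,\dots,m$. Two mixtures $\sum_c \alpha_c \delta_{s_c}$ and $\sum_c \alpha'_c \delta_{s'_c}$ of this family, with the special alternative fixed, therefore induce the same distribution if and only if
\[
\sum_c \alpha_c f_p(s_c)=\sum_c \alpha'_c f_p(s'_c)\quad\text{for all } p .
\]
Because the probabilities over $p$ sum to $1$, and the weights $\alpha$ sum to $1$, one of these equations is redundant. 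This leaves $m-1$ independent moment-type constraints on a measure over the single scalar $s$.

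\textbf{Counting.} Let $\mu=\sum_c\alpha_c\delta_{s_c}-\sum_c\alpha'_c\delta_{s'_c}$ be a signed measure with at most $2k$ atoms and total mass $0$. We need $\int f_p\,d\mu=0$ for $p=1,\dots,m$.

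1. Choose $2k$ distinct points $s_1,\dots,s_{2k}$.
2. The unknowns are the $2k$ coefficients. They are subject to $m$ linear equations from the $f_p$ (one of which is implied by mass zero), plus the mass-zero equation itself. This gives at most $m$ independent homogeneous equations in $2k$ unknowns.
3. Since $m\le 2k-1<2k$, there is a nonzero solution.
4. Split the solution into positive and negative parts. Because the total mass is zero, both parts have equal mass. Normalize each part to total mass $1$. These are the two mixtures, each with at most $k$ components (pad with zero weights, or perturb if exactly $k$ distinct components are needed).
5. Check that positivity works: the positive and negative supports are disjoint sets of $s$-values. Hence the two mixtures are distinct as sets of parameters, since they have different weight vectors.

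\textbf{Main obstacle.} The delicate point is the split in step 4. Each side must receive at most $k$ atoms, but the sign pattern of a kernel vector could a priori be unbalanced, say $2k-1$ positive atoms and $1$ negative. I would first try to force balance with a Chebyshev/Vandermonde-type argument. Writing $f_p(s)$ over a common denominator turns the constraints into vanishing of $\int g(s)\,s^q\,d\mu$ for $q=0,\dots,m-1$, with a fixed positive weight $g$. A nonzero signed measure on $2k$ points annihilating all polynomials of degree $<2k-1$ must alternate in sign, so it has exactly $k$ positive and $k$ negative atoms.

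If the exact rewriting fails, the fallback is to cite \citet{zhao2016learning}'s Theorem directly, which is the source being adapted. For $m\le 2k-1$ it also handles $m<2k-1$, because fewer constraints only enlarge the kernel, and we can restrict to a kernel vector supported on $m+1\le 2k$ points that alternates in sign.
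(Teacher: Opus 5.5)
Your proof is correct and uses the same construction as the paper. In both, one distinguished alternative carries a free scalar and all other alternatives are tied. Each component's distribution then depends only on where the distinguished alternative lands, the probability matrix has rank at most $m<2k$, and a kernel vector is split by sign into two mixtures.

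The real difference is your step 4, where you are more careful than the paper. The paper splits an arbitrary kernel vector into $I_+$ and $I_-$ and never checks that each side has at most $k$ components. For $m<2k-1$ this can fail. With $m=k=2$ and $e_c=c/10$ for $c=1,\dots,4$, the vector $\beta=(1,1,-5,3)$ lies in the kernel of the paper's $F$, yet it puts three components on one side.

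Your Vandermonde rewriting does go through, so the fallback is unnecessary. The per-position probability, written correctly, is
\[
f_p(s)=\frac{s}{(m-p)!\,\prod_{t=1}^{p}(s+m-t)}=\frac{g(s)}{(m-p)!}\prod_{u=0}^{m-p-1}(s+u),\qquad g(s)=\prod_{u=1}^{m-1}\frac{1}{s+u}.
\]
With this form:
\begin{itemize}
\item The constraints become $\sum_c \mu_c\, g(s_c)\, s_c^q=0$ for $q=0,\dots,m-1$, and these already force total mass zero.
\item On $m+1\le 2k$ sorted nodes, the kernel is one-dimensional and spanned by $\mu_c\, g(s_c)=\prod_{d\neq c}(s_c-s_d)^{-1}$.
\item These entries are nonzero and strictly alternate in sign.
\item Hence each side receives at most $\lceil (m+1)/2\rceil\le k$ atoms with pairwise distinct $s$-values, as Definition~\ref{def_identifiability} requires.
\end{itemize}

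One small slip: your displayed $f_p$ omits the factor $1/(m-p)!$, which comes from ordering the tied alternatives after the distinguished one. It rescales each constraint by a positive constant, so the argument is unaffected. You do need it for your remark that the probabilities over $p$ sum to one.
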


The result suggests that, for a larger number of clusters, more candidate responses are needed to identify the underlying clusters. The proof is deferred to Appendix \ref{sec_proof_prop_1}.

We now validate the above limitation of the mixture models. We construct a synthetic dataset to simulate various criteria. Each candidate response is represented by a list of $k$ integers, where the $c$-th integer serves as the score for cluster $c \in \{1, \dots, k\}$. We train a Qwen-3-0.6B model, varying the ranking size $m$ from 2 to 8 and the clusters $k$ from 2 to 4. We then measure the clustering accuracy on a held-out set between the inferred and true cluster assignments. Details are described in Section \ref{sec_additional_exp}. 

As shown in Figure \ref{fig_ranking_accuracy_varying_m_k}, the underlying clusters cannot be fully recovered when $m$ is small. Notably, relying on pairwise comparisons (when $m=2$) collapses to random guessing for $k=3$ and $k=4$. Conversely, increasing the ranking size enables the model to identify the true clusters.

\section{Our Approach}

In this section, we present an efficient algorithm for learning mixture models. Our algorithm involves two components: First, we augment the ranking by generating additional responses. Second, we design an efficient gradient-based implementation for estimating Plackett-Luce mixtures on the expanded rankings.

\subsection{Augmenting the ranking}

First, we propose to augment the ranking by generating additional responses. For an input prompt $x$, we expand the response set to size $m^\prime$ by generating $m^\prime - m$ new responses with the base model. Because no annotation is available for the generated responses, we treat them as ranked lower than existing responses as an approximation.
These new responses are appended as unranked candidates at the bottom of the existing ranking, resulting in a partial $m^\prime$-way ranking: $y_{\sigma(1)} \succ \dots \succ y_{\sigma(m)} \succ \{y_{m+1}, \dots, y_{m^\prime}\}$.

Our motivation is that the augmented responses are
sampled from an unaligned base model, so they are likely to be ranked below the original candidates that are curated by annotators. To validate this, we score the generated responses using a ranking model trained on each evaluation criterion of the UltraFeedback dataset. Over 81\% of the generated responses score below existing responses, and none is ranked in the top two positions. On the other hand, considering all possible cases of ranking all additional responses is computationally expensive. 

Denote $Y^\prime = Y \cup \{y_{m+1}, \dots, y_{m^\prime}\}$. We compute the probability $P_{\theta_c}(\sigma \mid x, Y^\prime, z = c)$ as:
\begin{align}\label{eq_mixture_of_PL_prob_augmented}
\prod_{t=1}^{m} \frac{\exp\left( r_{\theta_c}(x, y_{\sigma(t)}) \right)}{\sum_{l=t}^m \exp\left( r_{\theta_c}(x, y_{\sigma(l)}) \right) + Z^\prime(r_{\theta_c})},
\end{align}
where $Z^\prime(r_{\theta_c}) := \sum_{j=m+1}^{m^\prime} \exp(r_{\theta_c}(x, y_j))$. The scores of the augmented responses are included in the denominator.  
Then, we learn the mixture of PL models by substituting \eqref{eq_mixture_of_PL_prob_augmented} into Equation \eqref{eq_mixture_of_ranking}.  

Next, we show that with augmented responses, the identifiability can be achieved.

\begin{proposition}\label{prop_identifiability_with_augmented_examples}
Let $Y^{\textup{aug}} = \{y_{m+1}, \dots, y_{m^\prime}\}$ be a set of additional responses, forming an expanded set $Y^\prime = Y \cup Y^{\textup{aug}}$ of size $m^\prime$. 
Let $\cS_{m^\prime\!, m}$ be the space of partial rankings where each $\sigma \in \cS_{m^\prime\!, m}$ orders $m$ responses of $Y^\prime$ and leaves the other $m^\prime - m$ at the end of the ranking:
\begin{align*}
y_{\sigma(1)} \succ \dots \succ y_{\sigma(m)} \succ \{ y_{\sigma(m+1)}, \dots, y_{\sigma(m^\prime)} \},
\end{align*}
so that $\vert \cS_{m^\prime\!, m} \vert = m^\prime! / (m^\prime - m)!$, and the probability of each $\sigma \in \cS_{m^\prime\!, m}$ is given by Equation \eqref{eq_mixture_of_PL_prob_augmented}.
Assume that the augmented responses have distinct scores in every cluster $c$, meaning $r_{\theta_c}(x, y_i) \neq r_{\theta_c}(x, y_j)$ (in Equation \eqref{eq_definition_of_score}) for all distinct $y_i, y_j \in Y^{\textup{aug}}$. When $m^\prime - m \ge k$ and $m \geq 2$, the mixture of $k$ PL models over $\cS_{m^\prime\!, m}$ is identifiable in the sense of Definition \ref{def_identifiability}.
\end{proposition}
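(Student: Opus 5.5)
Throughout, I parametrize each cluster $c$ by the score vector $\gamma^{(c)} = (\gamma^{(c)}_1,\dots,\gamma^{(c)}_{m'})$ with $\gamma^{(c)}_j = \exp(r_{\theta_c}(x,y_j))$. These are normalized to lie on the simplex, since PL is invariant to rescaling. Identifiability in the sense of Definition~\ref{def_identifiability} is understood up to permutation of the clusters. Suppose two mixtures $(\alpha,\gamma)$ and $(\alpha',\gamma')$, each with $k$ components, induce the same distribution on $\cS_{m',m}$. The goal is to show that they coincide up to relabeling.

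\textbf{Step 1: reduce to the top-$1$ and top-$2$ marginals.} Since $m\ge 2$, marginalizing over positions $3,\dots,m$ gives the probability that $y_i$ is ranked first and $y_j$ second. For $i\neq j$ this equals
\begin{align*}
\sum_{c} \alpha_c\, \gamma^{(c)}_i \frac{\gamma^{(c)}_j}{1-\gamma^{(c)}_i}.
\end{align*}
Summing over $j$ gives the top-$1$ marginal $\sum_c \alpha_c \gamma^{(c)}_i$. Here I use that the partial-ranking likelihood \eqref{eq_mixture_of_PL_prob_augmented}, marginalized over the tail ordering, coincides with the standard PL marginal. From these quantities I form, for each fixed $i$, the moment-type equations
\begin{align*}
\sum_c \alpha_c \frac{\gamma^{(c)}_i}{1-\gamma^{(c)}_i}\, \gamma^{(c)}_j \quad \text{for all } j\neq i.
\end{align*}
Together with $\sum_j \gamma^{(c)}_j = 1$, these determine the weighted vectors $w_i := \sum_c \alpha_c \frac{\gamma^{(c)}_i}{1-\gamma^{(c)}_i}\gamma^{(c)}$ up to their $i$-th coordinate, and that coordinate is recoverable from the top-$1$ marginal. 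This is the standard route of \citet{zhao2016learning}: one obtains a family of linear equations in the unknowns $\{\alpha_c \gamma^{(c)}\}$, reweighted by functions of single coordinates.

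\textbf{Step 2: use the $\ge k$ augmented items as distinct ``probes''.} I restrict attention to indices $i \in Y^{\text{aug}}$, which supply at least $k$ probes. The matrix $[\gamma^{(c)}_i]_{i\in Y^{\text{aug}},\, c\in[k]}$ plays the role of a Vandermonde-type matrix. I then consider the polynomial or rational identity obtained by clearing denominators:
\begin{align*}
\sum_c \alpha_c \gamma^{(c)}_j\, f(\gamma^{(c)}_i) = \sum_c \alpha'_c \gamma'^{(c)}_j\, f(\gamma'^{(c)}_i),
\end{align*}
which holds for all $i \in Y^{\text{aug}}$ with $f(t)=t/(1-t)$, alongside the top-$1$ identities. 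The plan is to treat the combined set of $\le 2k$ distinct atoms $\{\gamma^{(c)}\}\cup\{\gamma'^{(c)}\}$ as a signed mixture with zero moments. I then show it must vanish identically, which forces the atoms to match in pairs with equal weights. The distinct-score assumption on $Y^{\text{aug}}$ is what makes the relevant $k\times k$ (or $2k$) evaluation matrices full rank. For the $2k$ atoms one needs enough probes; my first attempt is to use the $k$ augmented items, each combined with ranking at position $1$ versus position $2$, which gives $2$ evaluations per probe and hence $2k$ independent functionals.

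\textbf{Main obstacle.} The key difficulty is the rank argument in Step 2. Distinctness of scores within one cluster across augmented items does not immediately make a matrix indexed by (probe, atom) nonsingular across clusters. I would resolve this with a genericity argument. I would write the difference of the two mixtures as a nonzero measure on at most $2k$ points in the simplex. Then I would show that the function $t\mapsto \sum_c \mu_c \gamma^{(c)}_j t^{\ell}$, ranging over the available marginals (top-$1$ and top-$2$ on the augmented items, plus lower positions if needed), cannot vanish unless $\mu\equiv0$. This would follow \citet{zhao2016learning}'s proof that $m\ge 2k$ suffices, with the $m'-m\ge k$ tail items supplying the extra length that $m$ alone lacks. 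If the direct rank argument fails, I would fall back to proving \emph{generic} identifiability, valid outside a measure-zero set. That would use the nonvanishing of a single explicit determinant at one example point, and rely on the distinct-score assumption to exclude the degenerate locus.
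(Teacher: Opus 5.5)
You correctly flag the rank step in Step 2 as the main obstacle, but it cannot be closed, because the proposition is false under its stated hypothesis. The hypothesis only makes the augmented scores distinct \emph{within} each cluster. It says nothing about how the atoms differ on $Y^{\textup{aug}}$.

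Here is a counterexample. Take $m=2$, $k=3$, $m'=5$, and give every atom the same augmented scores $(\gamma_3,\gamma_4,\gamma_5)=(0.1,0.15,0.2)$, which are distinct in every cluster. Set $S=0.45$ and let the atoms be $\gamma(e)=(e,\,1-S-e,\,0.1,\,0.15,\,0.2)$ with $e\in(0,1-S)$. Write $P(y_i\succ y_j)$ for the probability of the partial ranking with $y_i$ first and $y_j$ second, i.e. $\gamma_i\gamma_j/(1-\gamma_i)$. Each of these is a fixed linear combination of $1$, $e$, $(1-e)^{-1}$, $(S+e)^{-1}$:
\begin{align*}
P(y_1\succ y_2)&=S+e-\frac{S}{1-e},\\
P(y_2\succ y_1)&=1-e-\frac{S}{S+e},\\
P(y_1\succ y_a)&=\gamma_a\Bigl(\frac{1}{1-e}-1\Bigr),\\
P(y_2\succ y_a)&=\gamma_a\Bigl(\frac{1}{S+e}-1\Bigr).
\end{align*}
The remaining entries are simpler: $P(y_a\succ y_1)$ and $P(y_a\succ y_2)$ are affine in $e$, and $P(y_a\succ y_b)$ is constant.

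Choose $e_1<\dots<e_5$. The $20\times 5$ probability matrix on $\mathcal{S}_{5,2}$ then factors through the $4\times 5$ matrix $V$ with columns $(1,e_c,(1-e_c)^{-1},(S+e_c)^{-1})$. Any $\beta\neq 0$ with $V\beta=0$ also satisfies $\sum_c\beta_c=0$. These four functions form a Chebyshev system on $(0,1-S)$: a nonzero combination times $(1-e)(S+e)$ is a nonzero polynomial of degree at most $3$. Hence $\beta$ has no zero entries and alternating signs. Normalizing its positive and negative parts gives a $3$-component and a $2$-component mixture, with five pairwise distinct score vectors, inducing the same distribution on $\mathcal{S}_{5,2}$. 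The same construction works for $m=2$, every $k\ge 3$ and every $m'\ge 2+k$.

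Conceptually, an augmented item whose score does not vary across atoms is a useless probe: its top-$1$ row is a multiple of $\mathbf{1}$. So your ``two evaluations per probe, hence $2k$ independent functionals'' count collapses. Likewise, the unranked tail does not supply ``extra length'' in the sense of full-ranking results, since the data remain top-$m$ rankings.

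The paper's proof uses essentially your $2k$ functionals: $\mathbf{1}$, the top-$1$ marginals $\omega^{(j)}$ of $k$ augmented items, and $k-1$ top-$2$ marginals. It bridges the same gap by assertion. It claims that, ``because the parameters are non-degenerate,'' some $\omega^{(j)}$ is independent of $\mathbf{1}$, and in Case 1 that $r^{(c)}_1$ takes pairwise different values across the $2k$ components. Neither follows from the hypothesis, and the example above violates both, since $\omega^{(j)}=\gamma_j\mathbf{1}$. Worse, in Case 1 the rows $\mathbf{1},\omega^{(1)},\dots,\omega^{(k)}$ of $\hat F$ span a space of dimension at most $2$. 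So $\operatorname{rank}\hat F\le k+1<2k$ for $k\ge 2$, and the polynomial $g$ there vanishes identically for a nonzero $t$. You will not find the missing step in the paper either.

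Your fallback is the right kind of repair, but it proves a different theorem. You would need either a cross-cluster separation assumption on the augmented scores, or a genericity statement backed by an explicit nonvanishing determinant at a witness point. In the generic version, the excluded locus is not the one cut out by the distinct-score condition, since the example satisfies that condition.
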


The idea is that conditioned on the assumption of distinct scores for the generated responses, a matrix representing all possible ranking probabilities under the mixture model has sufficiently many independent rows to match its column dimension, thus exhibiting full column rank. The proof is given in Appendix \ref{sec_proof_prop_2}.

We empirically verify the assumption that generated responses have distinct scores. Recall that the score is computed via model log-likelihoods over the responses. Generating non-duplicate responses at a high sampling temperature can produce responses with distinct scores. We evaluate the scores of the generated responses on the UltraFeedback dataset. We find that using a sampling temperature of $2$ yields no responses with the same scores under each evaluation criterion. Furthermore, we compute the coefficient of variation for their scores, computed as the standard deviation of the scores divided by their mean. The coefficient of variation is over 73\%, indicating substantial variance in the scores. 

\subsection{Estimating Plackett-Luce mixtures}

Learning the mixture model on expanded rankings is computationally expensive. Evaluating a mixture of $k$ PL models over $m$ responses involves $km$ full model forward passes at each iteration. Next, we introduce a method to reduce the computational cost from $O(km)$ to $O(ka)$ with a much smaller constant $a$ than $m$.

We estimate $r_{\theta}(x, y)$ based on a first-order approximation property that has been observed widely on language models \cite{zhang2025linear}.
The idea is to estimate the output of the model using a first-order approximation in the input embedding space. 
Let $h(x, y)$ be the input embedding of the response $y$ for prompt $x$, and let $r_{\theta_c}(x, y)$ be a function of $h(x, y)$. Given an anchor example $(x, y_0)$, we examine the first-order approximation of the model output $r_{\theta}(x, y)$ as:
\begin{align}\label{eq_linear_approximation}
& r_{\theta}(x, y) \approx r_{\theta}(x, y_0) + \notag \\
& \langle \nabla_{h} r_{\theta}(x, y_0), h(x, y) - h(x, y_0) \rangle + \epsilon_{x,y},
\end{align}
where $\nabla_{h}$ denotes the gradient of $r_\theta$ with respect to the input embeddings $h$, and $\epsilon_{x,y}$ is the approximation error. For responses of different lengths, we apply the approximation up to the length of the shorter response and ignore padding tokens.

We demonstrate empirically that the error $\epsilon_{x,y}$ remains small across various models and datasets. We evaluate $\epsilon_{x,y}$ across three LMs from 0.6 to 34 billion parameters, on the UltraFeedback dataset \cite{cui2023ultrafeedback}. For each example $(x, Y, \sigma)$, we randomly select one anchor response and estimate the outputs of the remaining responses. We compute the estimation errors over 50 randomly selected examples. Table \ref{tab_linear_approximation} reports the average squared relative error, $( {\epsilon_{x,y}} / {r_{\theta}(x, y)} )^2$, grouped by the relative distance in their input embeddings, ${\norm{h(x, y) - h(x, y_0)}} / {\norm{h(x, y_0)}}$.

We find that the estimation error remains less than \textbf{5}\%. Additionally, the model with more parameters tends to yield a smaller error. These results suggest that Equation \eqref{eq_linear_approximation} can accurately approximate the model output for estimating ranking probabilities. Similar results are observed on other datasets and are described in detail in Table \ref{tab_linear_approximation_helpsteer} of Appendix \ref{sec_additional_exp}.

\begin{table}[t!]
\centering
\caption{Relative approximation error of $\epsilon_{x, y}$, tested with language models of up to 34 billion parameters.}\label{tab_linear_approximation}
\resizebox{\columnwidth}{!}
{\begin{tabular}{c|ccccccc}
\toprule
Distance & Qwen-0.6B & Gemma-2B & CodeLlama-34B \\
\midrule
$0\%-5\%$   & $0.1_{\pm0.0}\%$ & $0.1_{\pm0.1}\%$ & $0.1_{\pm0.0}\%$\\
$5\%-10\%$  & $0.6_{\pm0.1}\%$ & $0.5_{\pm0.1}\%$ & $0.6_{\pm0.0}\%$\\
$10\%-15\%$ & $1.7_{\pm0.2}\%$ & $1.5_{\pm0.1}\%$ & $1.4_{\pm0.1}\%$\\
$15\%-20\%$ & $2.9_{\pm0.1}\%$ & $3.2_{\pm0.3}\%$ & $3.3_{\pm0.3}\%$\\
$20\%-25\%$ & $4.3_{\pm0.3}\%$ & $4.8_{\pm0.4}\%$ & $4.0_{\pm0.3}\%$\\
\bottomrule
\end{tabular}}
\label{tab_approx_error}
\end{table}

We then leverage this approximation to efficiently train the mixture of PL models. Specifically, we apply the first-order approximation to estimate the output for each response in a PL model. At each iteration, we randomly sample a subset of $a$ anchor responses, $Y_a \subseteq Y$. For each anchor $y_i \in Y_a$, we compute the exact output $r_\theta(x, y_i)$ and its gradient $g_i = \nabla_{h} r_\theta(x, y_i)$. For the remaining $m-a$ candidates, we estimate their outputs $\hat{r}_\theta(x, y_j)$ by applying Equation \eqref{eq_linear_approximation} and averaging the first-order approximations derived from all $a$ anchors:
\begin{align*}
\frac{1}{a} \sum_{y_i\in Y_a} \left( r_\theta(x, y_i) + g_i^\top (h(x, y_j) - h(x, y_i)) \right). 
\end{align*}
The procedure is described in Algorithm \ref{alg_prob_estimation}. 

Next, we fit a mixture of PL models with the expectation-maximization algorithm, which alternates between two steps. 
First, given $\theta^{(t)}$ and $\alpha^{(t)}$ at step $t$, we compute the posterior probability that the ranking $\sigma_i$ was generated by cluster $c$:
\begin{align}\label{eq_e_step}
\gamma_{i,c} := \frac{\alpha_c^{(t)} \cdot \hat{P}_{\theta^{(t)}}(\sigma_i \mid x_i, Y_i, z_i = c)}{\sum_{l=1}^k \alpha_l^{(t)} \cdot \hat{P}_{\theta^{(t)}}(\sigma_i \mid x_i, Y_i, z_i = l)}.
\end{align}

Second, we update the parameters by maximizing the log-likelihood. The prior probabilities are updated as: $\alpha_c^{(t+1)} = \frac{1}{n} \sum_{i=1}^n \gamma_{i,c}$. Then, we update model parameters by minimizing the weighted negative log-likelihood as:
\begin{align}\label{eq_m_step}
- \sum_{c=1}^k \sum_{i=1}^n \gamma_{i,c} \log \hat{P}_{\theta^{(t)}_c}(\sigma_i \mid x_i, Y_i, z_i = c).
\end{align}
We do not backpropagate gradients on the outputs estimated by gradients. 

\begin{algorithm}[t!]
\caption{Gradient-based Estimation of Ranking Probabilities}
\label{alg_prob_estimation}
\textbf{Input:} Input prompt $x$, candidate responses $Y = \{y_1, \dots, y_m\}$, ranking $\sigma$, current model $r_\theta$ \\
\textbf{Require:} Number of anchors $a$
\begin{algorithmic}[1]
\State $Y_a \leftarrow$ Sample $a$ anchor responses from $Y$
\For{$y_i \in Y_a$} \hfill $\triangleright$ Exact evaluations
    \State $r_\theta(x, y_i) \leftarrow$ Model output 
    \State $g_i = \nabla_{h} r_\theta(x, y_i) \leftarrow$ Gradient over the input embedding space 
\EndFor
\For{$y_j \in Y \setminus Y_a$}\hfill $\triangleright$ Approximation
    \For{$y_i \in Y_a$}
    \State $ \delta_{i,j} \leftarrow h(x, y_j) - h(x, y_i)$
    \EndFor
    \State $\hat{r}_\theta(x, y_j) \leftarrow \frac{1}{a} \sum_{y_i\in Y_a} ( r_\theta(x, y_i) + {g}_i^\top {\delta}_{i,j} )$ 
\EndFor
\State $\hat{P}_{\theta_c} \leftarrow$ Estimate ranking probability in Equation \eqref{eq_mixture_of_PL_prob_augmented} using the estimated scores
\State \textbf{Return} $\hat{P}_{\theta_c}$
\end{algorithmic}
\end{algorithm}

Taken together, we summarize the complete procedure in Algorithm \ref{alg_em_mixture_pl}. 
We compare our approach to related methods in terms of runtime and memory in Table \ref{tab_computation_efficiency}.
In Appendix \ref{sec_extension}, we will also discuss an extension of our approach to estimating a mixture of Mallows models.

\begin{algorithm}[t!]
\caption{Learning a Mixture of Plackett-Luce Models with Expectation-Maximization~(\acronym{})}
\label{alg_em_mixture_pl}
\textbf{Input:} $n$ rankings $\mathcal{D} = \{(x_i, Y_i, \sigma_i)\}_{i=1}^n$, a base language model $\pi_{\text{ref}}$ \\
\textbf{Require:} Number of clusters $k$, ranking size $m^\prime$, number of anchors $a$, learning rate $\eta$, number of training epochs $T$ 
\begin{algorithmic}[1]
\For{$(x_i, Y_i, \sigma_i) \in \mathcal{D}$}  \hfill \Comment{Augmentation}
    \State $Y_i^{\text{g}} \leftarrow$ Generate $m^\prime-m$ responses by $\pi_{\text{ref}}$
    \State $Y_i^\prime \leftarrow Y_i \cup Y_i^{\text{g}}$
\EndFor
\end{algorithmic}
\begin{algorithmic}[1]
\State ${\theta^{(0)}_c} \leftarrow$ Initialize from $\pi_{\text{ref}}$ for $c \in \{1, \dots, k\}$
\State $\alpha^{(0)}_c \leftarrow 1/k$ for $c \in \{1, \dots, k\}$
\For{$t = 0, \dots, T-1$}
    \For{$(x_i, Y^\prime_i, \sigma_i) \in \mathcal{D}$}\hfill \Comment{E-step}
            \State $\hat{P}_{\theta^{(t)}_c} \leftarrow$ Apply Algorithm \ref{alg_prob_estimation} to estimate ranking probability for $c \in \{1, \dots, k\}$
        \State $\gamma_{i,c} \leftarrow$ Compute the posterior probability in Equation \eqref{eq_e_step} for $c \in \{1, \dots, k\}$ 
    \EndFor
    \For{$c \in \{1, \dots, k\}$}\hfill \Comment{M-step}
        \State $\alpha^{(t+1)}_c \leftarrow \frac{1}{n} \sum_{i=1}^n \gamma_{i,c}$ 
        \State $\hat{P}_{\theta^{(t)}_c} \leftarrow$ Apply Algorithm \ref{alg_prob_estimation} %
        \State $\theta^{(t+1)}_c \leftarrow $ Minimize Equation \eqref{eq_m_step}  
    \EndFor
\EndFor
\State \textbf{Return} $\{{\theta^{(T)}_c}, \alpha^{(T)}_c\}_{c=1}^k$
\end{algorithmic}
\end{algorithm}

\section{Experiments}\label{sec_experiments}

We evaluate our algorithm by answering the following questions: How accurately does our algorithm identify underlying clusters and rank responses, particularly compared to baselines using mixtures of BT models? To what extent does our algorithm reduce computational and memory overhead? How does each component of our algorithm affect downstream performance?

We evaluate our methods across datasets involving multiple evaluation criteria and diverse user demographics. Our method outperforms baselines using single ranking models and mixtures of BT models by \textbf{43.7}\% and \textbf{15.2}\% in clustering and ranking accuracy, respectively. Furthermore, it reduces computational and memory costs by up to \textbf{3}$\times$, matching the performance of training PL models with full computation. Lastly, ablation studies isolate each component of the algorithm.

\subsection{Experimental setup}

\begin{table}[t!]
\caption{A summary of the runtime and memory usage by \acronym{}. Here, $k$ is the number of mixture components, $m$ is the number of responses in rankings, and $a$ is the number of anchor responses. $T$ is the runtime of the base training method, and $A$ is the memory usage on one response. In practice, the runtime and memory overhead ( $T_g$ and $A_g$) in the gradient estimation
is much smaller than $T$ and $A$. $a$ is typically three times smaller than $m$.}
\label{tab_computation_efficiency}
\centering
{\small
\begin{tabular}{lccc}
\toprule
\textbf{Approach} & \textbf{Runtime} & \textbf{Memory} \\ \midrule
Fit a single PL model & $mT$ & $mA$ \\
Fit mixtures of PL models & $kmT$ & $kmA$  \\
\acronym{} (Algorithm \ref{alg_em_mixture_pl}) & $kaT + T_g$ & $kaA + A_g$ \\
\bottomrule
\end{tabular}}
\end{table}

\noindent\textit{Datasets and models.}  First, we consider settings with multiple evaluation criteria, each as a distinct cluster. We use the UltraFeedback dataset \cite{cui2023ultrafeedback}, which ranks four candidate responses ($m=4$) per prompt across four criteria: helpfulness, honesty, instruction-following, and truthfulness ($k=4$). We use a subset where these criteria follow different rankings, yielding 3,612 training, 448 validation, and 452 test rankings.

Second, we consider settings where rankings originate from diverse user demographics, treating each as a cluster. We use the Persona dataset \cite{castricato2025persona}, which provides pairwise comparisons ($m=2$). We select a subset of 12 distinct profiles ($k=12$) spanning two age ranges, two sexes, and four races, resulting in 1,905 training, 237 validation, and 258 test rankings. We use Qwen-3-0.6B as the base language model. For both datasets, the input prompts have no overlap between training and test. %

\begin{table}[t!]
\centering
\caption{We report the average clustering and ranking accuracy (\%) on the UltraFeedback and Persona datasets. We compare \acronym{} against baselines that train single ranking models and mixtures of BT models. We run each experiment with three random seeds and report the average results with standard deviations.}
\label{tab_main_results_summary}
{\small
\begin{tabular}{lcc}
\toprule
{Methods} & {UltraFeedback} & {Persona} \\ \midrule
\multicolumn{3}{l}{{Clustering Accuracy}} \\ \midrule
MiCRo & 26.6 $\pm$ 2.1 & 19.6 $\pm$ 0.4 \\
MaxMin-RLHF & 27.2 $\pm$ 0.4 & 18.4 $\pm$ 0.2 \\
EM-DPO & 26.9 $\pm$ 2.1 & 18.3 $\pm$ 0.5 \\
\midrule
\acronym{} (Alg. \ref{alg_em_mixture_pl}) & \textbf{70.9} $\pm$ 2.3 & \textbf{57.1} $\pm$ 0.5 \\
\midrule\midrule
\multicolumn{3}{l}{{Ranking Accuracy}} \\ \midrule
DPO & 58.2 $\pm$ 1.5 & 57.7 $\pm$ 0.2 \\
LiPO & 63.3 $\pm$ 0.6 & 57.1 $\pm$ 0.8 \\
MiCRo & 57.7 $\pm$ 0.4 & 53.2 $\pm$ 0.8 \\
MaxMin-RLHF & 60.6 $\pm$ 0.7 & 49.8 $\pm$ 0.1 \\
EM-DPO & 59.3 $\pm$ 0.9 & 49.6 $\pm$ 1.5 \\
\midrule
\acronym{} (Alg. \ref{alg_em_mixture_pl}) & \textbf{75.0} $\pm$ 2.9 & \textbf{76.4} $\pm$ 0.6 \\
\bottomrule
\end{tabular}}
\end{table}

\begin{figure}[t!]
    \centering
    \begin{minipage}[b]{0.24\textwidth}
        \centering
        \includegraphics[width=\textwidth]{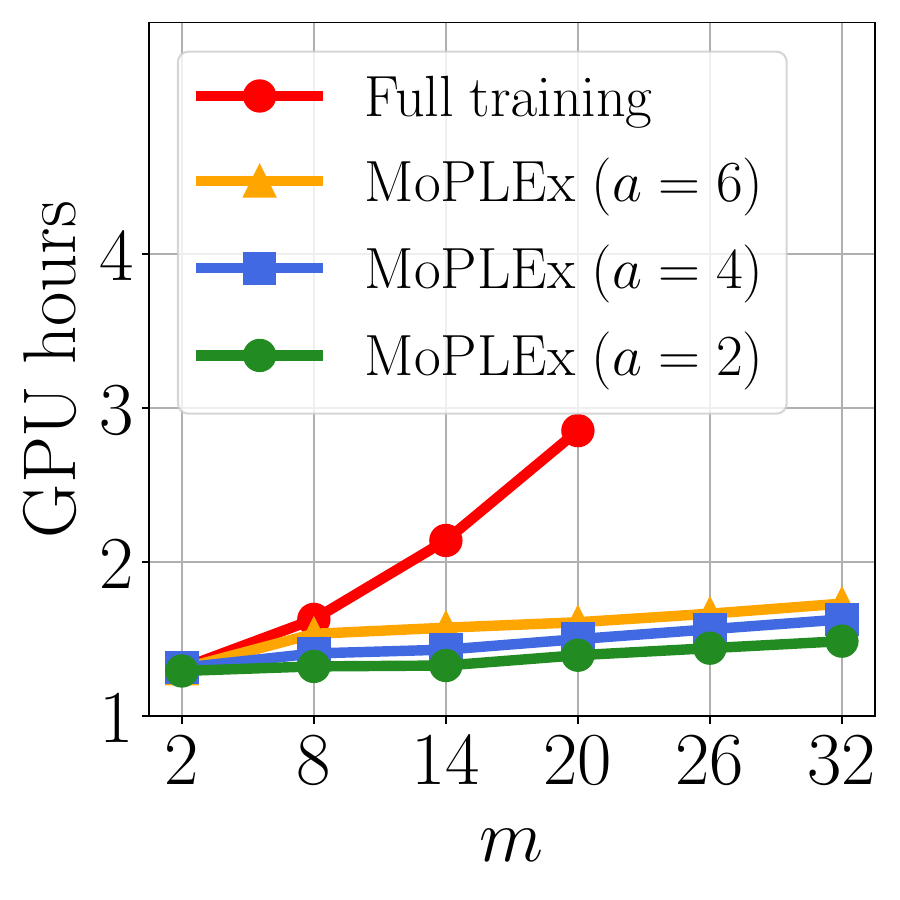}
    \end{minipage}\hfill
    \begin{minipage}[b]{0.24\textwidth}
        \centering
        \includegraphics[width=\textwidth]{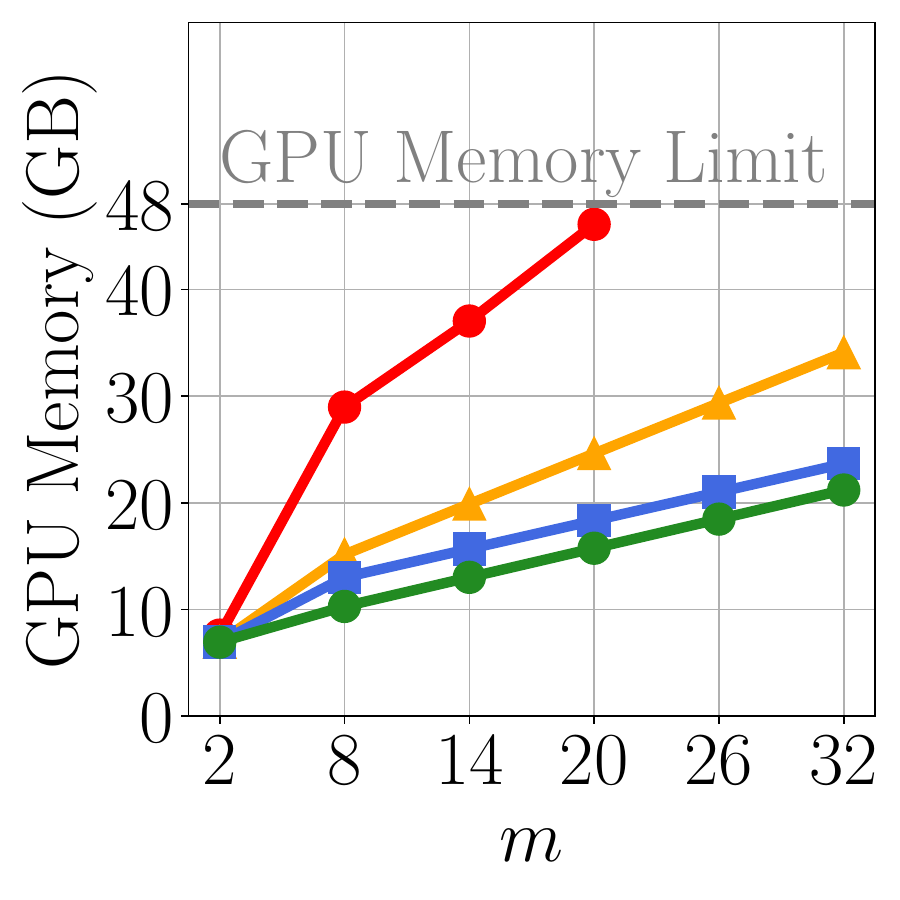}
    \end{minipage}
    \caption{We compare GPU hours and GPU memory between our method and fully training the mixtures of PL models, evaluated on the Persona dataset with $k=12$ clusters. Our method reduces runtime and memory cost by up to \textbf{3}$\times$ with comparable performance. 
    We evaluate the training by using a batch size of 2 and a context length of 1024. } 
    \label{fig_runtime_and_memory}
\end{figure}

\smallskip
\noindent\textit{Baselines.} We compare against two types of representative baselines: (1) single ranking models, DPO~\cite{dpo23} and its extensions; and (2) mixtures of BT models (MaxMin-RLHF~\cite{chakraborty2024maxmin}, MiCRo~\cite{shen2025micro}, and EM-DPO~\cite{chidambaram2025direct}). We fix the base model for all baselines. 

\smallskip
\noindent\textit{Implementations.} We use DPO as the base method and fine-tune with LoRA adapters. We set $k$ to $4$ and $12$ for the datasets, respectively. We set the ranking sizes $m^\prime$ between $4$ and $32$ and the number of anchors $a$ between $1$ and $8$. The effects of varying $k$, $m^\prime$, and $a$ are presented in Section \ref{sec_ablation}.

\subsection{Experimental results}

\begin{table*}[t!]
\centering
\caption{We report the ranking accuracies (\%) on the UltraFeedback dataset across individual evaluation dimensions and their average. We compare our approach against baselines using a single ranking model and mixtures of BT models. 
In addition, we report an ablation study of leaving out each component of \acronym{} on the UltraFeedback dataset.
Ranking accuracy is measured as the proportion of correctly predicted pairs out of the six possible pairs per four-response ranking. We report the mean and standard deviation over three random seeds.}
\label{tab_ablation_leave_on_out}
{\small
\begin{tabular}{lccccc}
\toprule
 Ranking accuracy (\%) & {Helpfulness} & {Honesty} & {Instruction-following} & {Truthfulness} & {Average} \\\midrule
DPO  & 56.4 $\pm$ 3.5 & 53.1 $\pm$ 0.2 & 60.0 $\pm$ 1.6 & 72.2 $\pm$ 0.0 & 58.2 $\pm$ 1.5 \\
LiPO & 63.8 $\pm$ 0.9 & 58.8 $\pm$ 0.9 & 56.2 $\pm$ 0.5 & 74.5 $\pm$ 0.4 & 63.3 $\pm$ 0.6 \\
MiCRo & 58.8 $\pm$ 1.4 & 52.1 $\pm$ 2.9 & 57.2 $\pm$ 3.9 & 70.8 $\pm$ 2.0 & 57.7 $\pm$ 0.4 \\
MaxMin-RLHF & 73.0 $\pm$ 0.6 & 51.1 $\pm$ 0.1 & 57.8 $\pm$ 1.6 & 48.6 $\pm$ 1.9 & 60.6 $\pm$ 0.7 \\
EM-DPO & 65.7 $\pm$ 8.3 & 54.2 $\pm$ 7.4 & 58.3 $\pm$ 0.8 & 61.9 $\pm$ 2.8 & 59.3 $\pm$ 0.9 \\
\midrule
\acronym{} (Algorithm \ref{alg_em_mixture_pl})  & 79.4 $\pm$ 3.9 & 73.0 $\pm$ 1.3 & 76.9 $\pm$ 2.6 & 70.5 $\pm$ 1.3 & 75.0 $\pm$ 2.9 \\
w/o Plackett-Luce models & $75.7 \pm 2.1$ & $71.3 \pm 3.7$ & $70.4 \pm 2.3$ & $69.2 \pm 2.3$ & $71.6 \pm 2.7$ \\
w/o generated responses & $75.0 \pm 2.9$ & $67.2 \pm 2.2$ & $72.2 \pm 2.6$ & $68.3 \pm 2.7$ & $70.7 \pm 2.6$ \\
w/o mixture models & $71.2 \pm 2.8$ & $63.8 \pm 1.4$ & $66.4 \pm 0.8$ & $60.6 \pm 2.1$ & $65.5 \pm 1.8$ \\
w/o gradient estimation & $80.4 \pm 2.7$ & $72.2 \pm 1.7$ & $76.8 \pm 2.8$ & $72.8 \pm 2.7$ & $75.6 \pm 2.3$ \\
\bottomrule
\end{tabular}
}
\end{table*}

\textit{Evaluation metrics.} We evaluate the mixture models on both clustering and ranking accuracy. For clustering, we assign each ranking to the cluster with the highest posterior probability. Due to permutation, we align inferred labels with ground-truth clusters via optimal matching to compute accuracy. For ranking, we measure how accurately each model predicts the rankings of every pair of responses within its assigned cluster, reporting the average across all clusters.

Table \ref{tab_main_results_summary} summarizes the performance on the UltraFeedback and Persona datasets. \acronym{} yields a \textbf{43.7}\% improvement in clustering accuracy over mixture-of-BT baselines (including MiCRo, MaxMin-RLHF, and EM-DPO). 
Furthermore, \acronym{} achieves a \textbf{15.2}\% relative improvement in ranking accuracy over these baselines on average. Results for each cluster on the UltraFeedback dataset are described in Table \ref{tab_ablation_leave_on_out}, and the rest are described in Appendix \ref{sec_additional_exp}.

We further evaluate the efficiency by measuring the training runtime in GPU hours and GPU memory usage in GB.
Figure \ref{fig_runtime_and_memory} compares our method against a mixture of PL models with full computation on the Persona dataset, varying the ranking size $m$ from 2 to 32 and the number of anchors $a$ between $2, 4,$ and $6$. Notably, fully training the PL model exceeds the 48GB memory limit whenever $m > 20$. By using $a = 2$ anchors, our method maintains comparable performance while reducing runtime and memory overhead by 2$\times$ and 3$\times$, respectively. Furthermore, with $a = 6$ anchors, our approach scales to rankings of $m = 32$ and outperforms the best fully trained PL models (at $m = 20$) by 4.6\%, while using 1.6$\times$ less runtime and 1.9$\times$ less memory.

\subsection{Ablation studies}\label{sec_ablation}

\noindent\textit{Effect of each component in \acronym{}.}
First, we conduct an ablation study to isolate the effect of each component in our algorithm. We compare our approach against four variants: (1) replacing Plackett-Luce models with Bradley-Terry models by transforming multi-way rankings into pairwise comparisons (w/o PL models); (2) training the PL mixture model without generating new responses (w/o generated responses); (3) training a single PL model instead of a mixture of PL models (w/o mixture models); and (4) using exact computation for all responses instead of gradient estimation (w/o gradient estimation). We report the results in Table \ref{tab_ablation_leave_on_out} on the UltraFeedback dataset. The results show that  our algorithm outperforms the first three variants. Furthermore, it matches the ranking accuracy of training a mixture of PL models with full computation while cutting GPU memory costs by 50\%.

\noindent\textit{Effect of generated responses diversity.} Next, we evaluate how the diversity of generated responses affects our algorithm. To control the diversity, we train our algorithm on the UltraFeedback dataset, varying the sampling temperature during generation between 0.5, 1.0, 2.0, and 4.0. Higher temperature yields more diverse responses. We show the ranking accuracy for the four criteria and their average in Table \ref{tab_ablation_temperature} of Appendix \ref{sec_additional_exp}. The results show that a high sampling temperature (around 2.0) yields diverse responses and benefits ranking accuracy. Higher temperatures degrade the generated responses and reduce accuracy. 

\noindent\textit{Choice of hyperparameters.} We conduct ablation studies to analyze hyperparameters on the Persona dataset. Increasing $k$ beyond 12 yields no significant gains in clustering or ranking accuracy. Second, expanding the augmented ranking size $m^\prime$ from 2 to 32 improves performance. These gains plateau after $m^\prime = 26$. Finally, although more anchors improve approximation, increasing $a$ beyond 6 provides no obvious gain to overall performance. We report the full results in Appendix \ref{sec_additional_exp}.

\section{Related Work}

Efficient learning from ranking data is an important problem in areas like social choice and language model alignment \cite{conitzer2024position}. 
To capture more complex rankings, prior works have modeled intransitive rankings \cite{makhijani2019parametric}, partial rankings \cite{awadelkarim2024statistical}, and ranking distributions centered around multiple orderings \cite{seshadri2020learning}. 
Most relevant to our study is learning mixture models for ranking. Prior work has established identifiability conditions for Plackett-Luce (PL) models and designed a generalized method of moments to learn two-component PL models \cite{zhao2016learning}. Polynomial-time algorithms are developed to provably learn mixtures of two Mallows models via tensor decomposition \cite{awasthi2014learning} and learn mixtures with any constant number of components \cite{liu2018efficiently}. While prior works focus on learning from rankings such as voting data, our work studies learning mixtures of ranking models on text datasets and designs efficient algorithms for language models.

Several studies have designed ranking models beyond BT models in preference optimization. GPO \cite{zhangbeyond} embeds responses into a latent space to capture intransitive pairwise rankings. Recent work has proposed learning a single multi-way ranking model with surrogate losses to approximate ranking metrics \cite{zhao2025permutative}. For datasets with multiple evaluation criteria, prior works have trained regression models on evaluation scores \cite{wang2024interpretable,wang2024arithmetic}.  EM-DPO \cite{chidambaram2025direct} learns mixtures of BT models from partial rankings. In contrast, our work focuses on learning mixtures of multi-way ranking models.

Another line of work involves computing the influence of training data on model predictions. A commonly used technique is influence functions, effective for tracking the impact of training segments in sequence tagging \cite{jain2022influence}. Prior work proposed function vectors to represent input-output functions using a set of attention heads in language models \cite{todd2024function}. To model task relationships, existing methods compute higher-order task affinity aggregated from multitask training outcomes across task subsets \cite{li2023boosting} or fit linear surrogate models to approximate these outcomes \cite{li2023identification}. 
Gradients have been used as features for efficiently computing model influence functions for modeling task relationships \cite{li2024scalable,li2024scalableft,yang2025precise}. Recent methods leverage these gradient features to group tasks and train weighted ensembles of low-rank adapters \cite{li2025efficient}. Such gradient-based approaches have also been extended to model task relationships in multi-objective reinforcement learning \cite{zhang2025scalable} and multitask algorithmic reasoning \cite{li2026efficiently}. %

There is a vast body of work on learning from a mixture of data distributions. %
For example, previous studies have applied kernel methods to learn predictors for a test distribution using labeled samples from several related distributions \cite{blanchard2011generalizing}. In a multitask learning setting for predicting rewards across multiple arms in contextual bandits, kernel mean embeddings have been used to measure task similarities by characterizing each arm based on its contextual probability distribution \cite{deshmukh2017multi}. %
While traditional semantic class learning and label propagation rely on graph structural linkages \cite{kozareva2011class}, our algorithm resolves structural unidentifiability in rank aggregation through generative ranking augmentation.
Mixture proportion estimation has been studied to estimate the proportion with clean and corrupted data, for training classifiers from noisy labels \cite{scott2015rate}.
Expectation-maximization algorithms are designed to jointly fit truncated multivariate Gaussian distributions to truncated and censored data \cite{lee2012algorithms}.  In comparison, our work designs an algorithm to learn a mixture of Plackett-Luce models from a text dataset with multiple preference criteria.

\section{Conclusion}

This paper investigates learning from datasets with multiple distinct ranking distributions. We identify the unidentifiability limit of mixture ranking models. To address this, we propose an efficient algorithm that resolves unidentifiability and scales to large models. Evaluations on datasets with preferences from multiple evaluation criteria or demographic populations demonstrate our method's performance and efficiency advantages over existing approaches, including learning a single ranking model and mixtures of Bradley-Terry models. 

\section*{Limitations and Future Works}

First, our algorithm treats generated responses ranked lower than existing responses in the dataset as an approximation. While empirical evidence supports this approximation, exceptions where generated responses rank higher can occur in practice. One alternative is to compute the probability of all possible rankings for new responses. While computing all possible cases is computationally expensive, designing an efficient algorithm to estimate these rankings is an interesting direction for future work.
Second, while our current approach learns mixtures of ranking models with a fixed number of components, dynamically adapting this number in continual learning settings remains a promising open problem. Additionally, our method evaluates single-turn conversational data. Extending this approach to multi-turn conversations where user preferences dynamically evolve presents another valuable direction for future research. Our method provides a foundation to scale up learning ranking models for such extensions.

\section*{Potential Risks}

We discuss a few potential risks. 
Our work focuses on learning a mixture of Plackett-Luce models to identify clusters of rankings. Because the algorithm groups preferences into distinct clusters, it might learn spurious correlations between the preferences and certain specific demographics. One future direction is to examine misclassification errors in each group and consider group distributionally robust optimization. 

\section*{Acknowledgment}

We are grateful to Byron Wallace for several discussions related to this paper.
Thanks to Zhenshuo Zhang and Youran Ye for their feedback.
The work of D. Li, Z. Zhang, and H. Zhang is in part supported by NSF award IIS-2412008.

AI assistants have been used to enhance the clarity and presentation of the work. Specifically, the tools are used for proofreading, identifying typographical or grammatical mistakes, and polishing writing. Furthermore, AI assistants are used in formatting and verifying the mathematical notations and derivations of this paper.

\bibliography{./bibliography/references,./bibliography/ref_ranking}
\clearpage

\appendix
\section{Omitted Technical Materials}

\subsection{Notations}

We describe the notation used throughout the paper. The dataset $\mathcal{D} = \{(x_i, Y_i, \sigma_i)\}_{i=1}^n$ consists of $n$ ranking instances. Each instance pairs an input prompt $x_i$ with a set $Y_i = \{y_1, \dots, y_m\}$ of $m$ candidate responses and a ranking $\sigma_i$, a permutation of the candidate indices recording the order in which the responses are preferred. 
$m^\prime$ denotes the ranking length once the generated responses have been appended. $a$ denotes how many responses are evaluated exactly as anchors for gradient-based estimation.
$h(x, y)$ is the input embedding of response $y$ under prompt $x$.
 
$k$ is the number of clusters, each corresponding to one ranking distribution.
$z_i$ is the unobserved cluster assignment of instance $i$. The mixing weight $\alpha_c$ is the prior probability of cluster $c$, and $\gamma_{i,c}$ is the corresponding posterior, that is, the responsibility of cluster $c$ for instance $i$. For a cluster $c$, the ranking probability is written as $P_{\theta_c}$ and is determined by a scalar score $r_{\theta_c}(x, y)$ assigned to each response. The score is given by a language model $\pi_{\theta_c}$ and a reference model $\pi_{\text{ref}}$, with $\beta$ controlling the strength of the KL-divergence penalty between them.

\subsection{Preliminaries}

To precisely describe the concepts used in this paper, we first define the identifiability of a mixture of $k$ Plackett-Luce models, following the terminology from \citet{zhao2016learning}. 

A Plackett-Luce (PL) model defines the probability distribution over rankings of $m$ alternatives. In our context, these alternatives are candidate responses $\{y_1, \dots, y_m\}$ to a prompt $x$. A ranking $\sigma$ specifies the order $y_{\sigma(1)} \succ y_{\sigma(2)} \succ \dots \succ y_{\sigma(m)}$, where $\sigma(t)$ denotes the response index at rank $t$. Given a score vector $r$ in a space $\set{r | r_i \in [0, 1], \text{for } i = 1, \dots, m, \text{and } \sum_{i=1}^m r_i =1}$, with a score assigned to each alternative, the probability of observing ranking $\sigma$ is:
\begin{align*}
P(\sigma \vert{} r ) = \prod_{i=1}^{m-1} \frac{r_{\sigma(i)}}{\sum_{j=i}^m r_{\sigma(j)}}.
\end{align*}
Recall that we compute these scores using a language model parameterized by $\theta$, denoted as $r_\theta(x, y)$, using DPO.
In this section, we focus on analyzing the identifiability of PL models in terms of the score vector $r$ following \citet{zhao2016learning}. We do not discuss the identifiability of the network parameters $\theta$, while recognizing that distinct network weights could yield identical scores for the same input.

For a PL model with score vectors $r, v \in \mathbb{R}^m$, the model is strictly identifiable if $P(\cdot\vert{}r) = P(\cdot\vert{}v)$ implies $r = v$. For mixture models, we state the definition to account for label switching, where permuting the indices of the mixture components does not change the marginal distribution.

\begin{definition}[Identifiability of $k$-PL]
\label{def_identifiability}
A mixture of $k$ Plackett-Luce models ($k$-PL) over $m$ alternatives is identifiable if there do not exist two distinct mixture formulations that produce the same distribution over all rankings. Specifically, the model is identifiable if there do not exist:
\begin{enumerate}
    \item Two integers $k_1, k_2 \le k$,
    \item A set of pairwise different score vectors $r^{(1)}, \dots, r^{(k_1)}$ and $v^{(1)}, \dots, v^{(k_2)}$ (no two vectors are exactly the same),
    \item Strictly positive mixing weights ${\alpha}^{(1)} = (\alpha^{(1)}_1, \dots, \alpha^{(1)}_{k_1})$ and ${\alpha}^{(2)} = (\alpha^{(2)}_1, \dots, \alpha^{(2)}_{k_2})$ where $\sum_{r=1}^{k_1} \alpha^{(1)}_r = 1$ and $\sum_{r=1}^{k_2} \alpha^{(2)}_r = 1$,
\end{enumerate}
such that for every possible ranking $\sigma$, the following equality holds:
\begin{align}
\sum_{c=1}^{k_1} \alpha^{(1)}_c P(\sigma \vert{} r^{(c)}) = \sum_{c=1}^{k_2} \alpha^{(2)}_c P(\sigma \vert{} v^{(c)}).
\end{align}
\end{definition}

To prove the identifiability of $k$-PL models, we analyze the rank of a probability matrix involving the probabilities over $m!$ possible rankings over $m$ alternatives.
For a PL model with a score vector $r$, we represent a distribution over all $m!$ rankings as an $m! \times 1$ column vector, denoted as $f_m(r)$, where each entry is the probability of a specific ranking $\sigma$. For example, when $m=3$, we have:
\begin{align*}
    f_3(r) = \begin{pmatrix} \Pr(y_1 \succ y_2 \succ y_3 \vert{} r) \\ \Pr(y_1 \succ y_3 \succ y_2 \vert{} r) \\ \Pr(y_2 \succ y_1 \succ y_3 \vert{} r) \\ \Pr(y_2 \succ y_3 \succ y_1 \vert{} r) \\ \Pr(y_3 \succ y_1 \succ y_2 \vert{} r) \\ \Pr(y_3 \succ y_2 \succ y_1 \vert{} r) \end{pmatrix} = \begin{pmatrix} \frac{r_1 r_2}{1-r_1} \\ \frac{r_1 r_3}{1-r_1} \\ \frac{r_1 r_2}{1-r_2} \\ \frac{r_2 r_3}{1-r_2} \\ \frac{r_1 r_3}{1-r_3} \\ \frac{r_2 r_3}{1-r_3} \end{pmatrix}.
\end{align*}

Then, we define a probability matrix $F$ as an $m! \times 2k$ matrix formed by concatenating the column vectors of these $2k$ PL models. Given $r^{(1)}, \dots, r^{(2k)}$, we define
\begin{align}
F = \Big[ f_m(r^{(1)}), f_m(r^{(2)}), \dots, f_m(r^{(2k)}) \Big].
\end{align}
The entry  $F_{\sigma, c}$ corresponds to the probability of a ranking $\sigma$ in the $c$-th PL model. Note that $F$ is a function of the $2k$ score vectors. 

We prove the identifiability or non-identifiability of $k$-PL by analyzing the rank of $F$. 
If the rank of $F$ is $2k$ for all pairwise different vectors ($r^{(1)}, \dots, r^{(2k)}$), then the $k$-PL model is identifiable. Conversely, if the rank of $F$ is less than $2k$,  the $k$-PL model is non-identifiable.

\subsection{Proof for Proposition \ref{prop_non_identifiability_mixtures}}\label{sec_proof_prop_1}

Now we present the proof of Proposition \ref{prop_non_identifiability_mixtures} for the non-identifiability of mixture ranking models. 
The key idea is to show that the row space of the probability matrix $F$ can exhibit a reduced dimensionality under a construction. Under this construction, the row space of the probabilities $m!$ possible rankings can be represented by $m$ row vectors, thus proving that the rank $F$ is less than $2k$. We then prove that this dimensionality collapse leads to two $k$-PL models that yield the same distributions.

\begin{proof}[Proof of Proposition \ref{prop_non_identifiability_mixtures}] 
Our proof is based on analyzing the $m! \times 2k$ probability matrix $F$. 
To demonstrate unidentifiability, we aim to prove that a non-zero coefficient vector $\beta \in \mathbb{R}^{2k}$ exists such that $F \beta = \mathbf{0}$. 

To prove the unidentifiability, we construct a specific symmetric structure for $r^{(c)}$, and show that under such a construction, the mixture of the ranking models is unidentifiable. We construct a combined pool of $2k$ distinct latent clusters. Let $r^{(c)}$ denote the score vector for cluster $c \in \{1, \dots, 2k\}$. For a given set of $m$ alternatives $Y = \{y_1, y_2, \dots, y_m\}$, we assign the following scores under each cluster $c$:
\begin{itemize}
    \item We assign a distinct score $e_c \in (0, 1)$ to alternative $y_1$. We ensure $e_i \neq e_j$ for all $i \neq j$.
    \item We assign an identical score $b_c = \frac{1 - e_c}{m-1}$ to all remaining alternatives $y_2, \dots, y_m$.
\end{itemize}

Now, we evaluate the entries of $F_{\sigma, c}$ under this specific construction. Let $t \in \{1, \dots, m\}$ denote the rank position of alternative $y_1$ in the permutation $\sigma$. In the Plackett-Luce model, the probability of selecting the alternatives in the order specified by $\sigma$ is computed as a sequence of $m-1$ choices. This sequential product simplifies by grouping the choices into three cases:
\begin{itemize}
    \item First, before rank $t$ (steps $1$ to $t-1$), an alternative with score $b_c$ is chosen. Assuming the scores sum to 1 ($e_c + (m-1)b_c = 1$), the remaining sum of scores at step $j$ is $1 - (j-1)b_c$. The probability for these steps is $\prod_{j=1}^{t-1} \frac{b_c}{1 - (j-1)b_c}$.
    \item Second, at rank $t$, alternative $y_1$ with score $e_c$ is chosen. The remaining sum of scores is $1 - (t-1)b_c$. The probability is $\frac{e_c}{1 - (t-1)b_c}$.
    \item Third, after rank $t$ (steps $t+1$ to $m$), all remaining $m-t$ alternatives have the identical score $b_c$. By symmetry, the probability of any specific ordering of these remaining identical alternatives is simply $\frac{1}{(m-t)!}$.
\end{itemize}
Therefore, the likelihood entry simplifies to a function that depends strictly on $y_1$'s rank position $t$ and the cluster $c$. Let us denote this unique probability value as $v_{t, c}$:
\begin{align}
v_{t, c} = \frac{1}{(m-t)!} \frac{e_c b_c^{t-1}}{\prod_{j=0}^{t-1} (1 - j b_c)}.
\end{align}

Because the probability $F_{\sigma, c}$ evaluates exactly to $v_{t, c}$ for \emph{any} ranking $\sigma$ where $y_1$ is at rank $t$, the $m!$ rows of the likelihood matrix $F$ consist of only $m$ unique types of rows. We can formally express this by factorizing the $m! \times 2k$ matrix $F$ into the product of two smaller matrices, $M$ and $V$:
\begin{equation}
F = M V,
\end{equation}
where we explicitly construct $M$ and $V$ as follows:
\begin{itemize}
    \item $V$ is an $m \times 2k$ matrix capturing the unique probability values across all clusters. Its $(t, c)$-th entry is exactly $V_{t, c} = v_{t, c}$, representing the sequence probability in cluster $c$ when $y_1$ is chosen at step $t$.
    \item $M$ is an $m! \times m$ binary indicator matrix. Its rows represent the $m!$ possible rankings $\sigma$, and its columns represent the $m$ possible rank positions $t$. The entry $M_{\sigma, t} = 1$ if alternative $y_1$ is exactly at position $t$ in ranking $\sigma$, and $0$ otherwise. 
\end{itemize}
Since $y_1$ can occupy exactly one position in any given ranking, each row of $M$ contains exactly one $1$ and $m-1$ zeros. Multiplying row $\sigma$ of $M$ by column $c$ of $V$ simply selects the correct value $v_{t,c}$ to form $F_{\sigma, c}$. 

Since the inner dimension of both matrices is $m$, we establish the exact dimensionality bound for $F$:
\begin{align*}
\text{rank}(F) = \min \{m, 2k\}.
\end{align*}

Thus, the dimension of the null space of $F$ is larger than $1$, under the proposition's condition $m \le 2k - 1$. Therefore, the null space of $F$ is strictly non-trivial, guaranteeing the existence of a non-zero vector $\beta$ such that $F \beta = \mathbf{0}$.

Furthermore, because the rows of $F$ represent probabilities over an exhaustive sample space, $\sum_{\sigma} F_{\sigma, c} = 1$ for all $c$. Summing the rows of $F \beta = \mathbf{0}$ yields:
\begin{align*}
& \sum_{\sigma} \sum_{c=1}^{2k} F_{\sigma, c} \beta_c = 0 \\
= &  \sum_{c=1}^{2k} \beta_c \left( \sum_{\sigma} F_{\sigma, c} \right) = \sum_{c=1}^{2k} \beta_c = 0.
\end{align*}
Since $\beta$ is a non-zero vector whose elements sum to zero, it must contain both strictly positive and strictly negative elements. Let $I_+ = \{c \mid \beta_c > 0\}$ and $I_- = \{c \mid \beta_c < 0\}$. We define a normalization constant $W = \sum_{c \in I_+} \beta_c = \sum_{c \in I_-} -\beta_c$. 

We now construct the two mixtures. We define two disjoint subsets of cluster parameters $r = \{r^{(c)}\}_{c \in I_+}$ and $r' = \{r^{(c)}\}_{c \in I_-}$. We construct their corresponding mixing weights as $\alpha_c = \frac{\beta_c}{W}$ for $c \in I_+$, and $\alpha'_c = \frac{-\beta_c}{W}$ for $c \in I_-$.

Because $F \beta = \mathbf{0}$, we have:
\begin{equation*}
\sum_{c \in I_+} \alpha_c F_{\sigma, c} = \sum_{c \in I_-} \alpha'_c F_{\sigma, c} \quad \text{for all } \sigma.
\end{equation*}
This confirms that the mixture model defined by $(r, \alpha)$ and the disjoint mixture model defined by $(r', \alpha')$ produce the same marginal probability distribution over the $m$-way rankings, proving the model is non-identifiable.

\end{proof}

\subsection{Proof for Proposition \ref{prop_identifiability_with_augmented_examples}}\label{sec_proof_prop_2}

Next, we present the proof for analyzing the rank of the probability matrix of partial rankings over the augmented set of responses. 
We assume that the generated responses exhibit distinct scores. We show that, under this assumption, a probability matrix (representing all possible ranking probabilities under the mixture model) has sufficiently many independent rows to match its column dimension, thus exhibiting a full column rank.

\begin{proof}[Proof of Proposition \ref{prop_identifiability_with_augmented_examples}]
Recall that we augment the original $m$ responses with additional generated responses and expand the candidate set to $Y'$ with a size of $m'$.
In this proof, we examine the probability over the space of partial rankings over $m'$ alternatives, formed by selecting a ranked list of $m$ items and leaving the remaining $m'-m$ items unranked at the bottom. The total number of partial rankings in the expanded space is $\frac{m'!}{(m'-m)!}$.

For an augmented set of $m'$ alternatives, we denote such a partial ranking $\sigma$ as $y_{\sigma(1)} \succ y_{\sigma(2)} \succ \dots \succ y_{\sigma(m)} \succ \set{y_{\sigma(m+1)}, \dots, y_{\sigma(m')}}$. Given a score vector $r$ of dimension $m'$, the probability of observing the partial ranking $\sigma$ in a PL model is: 
\begin{align*}
P(\sigma \vert{} r ) = \prod_{i=1}^{m-1} \frac{r_{\sigma(i)}}{\sum_{j=i}^m r_{\sigma(j)} + Z(\sigma)},
\end{align*}
where $Z(\sigma) = \sum_{t=m+1}^{m'} r_{\sigma(t)}$ represents the sum of scores of alternatives ranked at the bottom. 

Given $2k$ score vectors $r^{(1)}, \dots, r^{(2k)}$, we then define the probability matrix over the augmented set of alternatives as $F^{\textup{aug}}$, which is a $\frac{m'!}{(m'-m)!} \times 2k$ matrix, where each row corresponds to one of the possible partial rankings and each column corresponds to one of the $2k$ PL components. 

In the conditions of this proposition, we assume that the newly generated augmented responses exhibit distinct scores within each PL model. For any distinct additional alternatives $i, j \in \set{m+1, \dots, m'}$, we assume $r^{(c)}_i \neq r^{(c)}_j$ for each PL model $c$. As these scores are distinct, the denominator term $Z(\sigma)$ depends on the specific combinatorial subset of items left unselected in the bottom $\sigma$. Consequently, for distinct partial rankings $\sigma_1 \neq \sigma_2$ that leave different subsets of items unranked, the denominators differ: $Z(\sigma_1) \neq Z(\sigma_2)$. 
This makes it impossible to factor $F^{\textup{aug}}$ into the rank-deficient form. 

We now show the identifiability of $F^{\textup{aug}}$ by examining its rank. We will show that for all possible pairwise different vectors, the rank of $F^{\textup{aug}}$ is equal to $2k$. We first obtain a $2k \times 2k$ matrix $\hat{F} = T \times F^{\textup{aug}}$ by linearly combining some row vectors of $F^{\textup{aug}}$ via a linear transformation $T$. Then, we show that $\operatorname{rank}(\hat{F}) = 2k$, which implies that $\operatorname{rank}(F^{\textup{aug}}) = 2k$.

Under the conditions of the proposition, we have at least $k$ alternatives in the additional augmented set ($m^\prime - m \ge k$). Without loss of generality, we select exactly $k$ items from the augmented set, denoted as $y_1, y_2, \dots, y_k$. Let $r_j^{(c)}$ denote the score of item $y_j$ in the $c$-th Plackett-Luce model, which is also the marginal probability that item $y_j$ is ranked at the top. Denote a row vector $\omega^{(j)} = [r_j^{(1)}, r_j^{(2)}, \dots, r_j^{(2k)}]$ as these marginal probabilities across all $2k$ components.

We obtain the $2k \times 2k$ matrix $\hat{F}$ by summing over the corresponding rows in $F^{\textup{aug}}$ to obtain the following probabilities. Here, the matrix $T$ is a 0-1 matrix indicating the selection of rows:
\begin{itemize}
\item The sum of all rows in $F^{\textup{aug}}$, yielding the vector $\mathbf{1} = [1, 1, \dots, 1]$.
\item The sum of all rows that rank $y_j$ at the top for $j \in \{2, \dots, k\}$, yielding the row vector $\omega^{(j)}$.
\item The sum of all rows that rank $y_j$ at the top and $y_1$ as the second, for $j \in \{2, \dots, k\}$. The $c$-th element of this row vector is $\frac{r_j^{(c)} r_1^{(c)}}{1 - r_j^{(c)}}$.
\end{itemize}
Then, the matrix $\hat{F}$ takes the following form:
\begin{align*}
    \hat{F} = \begin{bmatrix}  1 & 1 & \dots & 1 \\  r_1^{(1)} & r_1^{(2)} & \dots & r_1^{(2k)} \\  r_2^{(1)} & r_2^{(2)} & \dots & r_2^{(2k)} \\  \vdots & \vdots & \ddots & \vdots \\  r_k^{(1)} & r_k^{(2)} & \dots & r_k^{(2k)} \\  \frac{r_2^{(1)} r_1^{(1)}}{1 - r_2^{(1)}} & \frac{r_2^{(2)} r_1^{(2)}}{1 - r_2^{(2)}} & \dots & \frac{r_2^{(2k)} r_1^{(2k)}}{1 - r_2^{(2k)}} \\  \vdots & \vdots & \ddots & \vdots \\  \frac{r_k^{(1)} r_1^{(1)}}{1 - r_k^{(1)}} & \frac{r_k^{(2)} r_1^{(2)}}{1 - r_k^{(2)}} & \dots & \frac{r_k^{(2k)} r_1^{(2k)}}{1 - r_k^{(2k)}}  \end{bmatrix}.
\end{align*}
Because the parameters are non-degenerate, at least one vector in $\{\omega^{(1)}, \dots, \omega^{(k)}\}$ is linearly independent of $\mathbf{1}$. Without loss of generality, suppose $\omega^{(1)}$ is linearly independent of $\mathbf{1}$. 

Next, we prove the proposition by examining the following two cases. 

\textbf{Case 1.} $\omega^{(2)}, \dots, \omega^{(k)}$ are all linear combinations of $\mathbf{1}$ and $\omega^{(1)}$.  For all $j \in \{2, \dots, k\}$, we can rewrite $r_j^{(c)} = p_j r_1^{(c)} + q_j$ for some constants $p_j, q_j$. In this case, the entries of $\hat{F}$ are rational functions of $r_1^{(c)}$. Because the scores of the augmented items are distinct, $r_1^{(c)}$ takes on pairwise different values across the $2k$ components.

Then, we prove that the rank of $F^{\textup{aug}}$ is equal to $2k$ by contradiction. Suppose that $\operatorname{rank}(\hat{F}) < 2k$. Then, there exists a non-zero vector $t = [t_1, \dots, t_{2k}]^\top$ spanning the null space, such that $t^\top \hat{F} = \mathbf{0}$.
For the dot product of $t$ with the $c$-th column of $\hat{F}$ yields:
\begin{align*}
& t_1 + t_2 (r_1^{(c)}) + \sum_{j=2}^{k} t_{j+1} (r_j^{(c)}) + \\ & \sum_{j=2}^{k} t_{k+j} \left( \frac{r_j^{(c)} r_1^{(c)}}{1 - r_j^{(c)}} \right) = 0.
\end{align*}  
By substituting $r_j^{(c)} = p_j r_1^{(c)} + q_j$ into the above equation and letting the variable $x = r_1^{(c)}$, we can define a function $f(x)$ such that $f(x) = 0$ for every component $c$.

Then, we can convert this function into a polynomial $g(x)$ by the product of all its denominators.  The highest degree terms in $g(x)$ have a maximum possible degree of $k$. Thus, $g(x)$ is a polynomial of degree at most $k$.  Because $g(x) = 0$ must hold across $2k$ PL-models, the polynomial $x$ must have $2k$ distinct roots. However, a non-zero polynomial of degree $k$ can have at most $k$ roots. Because $k < 2k$, this creates a contradiction. Therefore, $\operatorname{rank}(\hat{F})$ must equal $2k$.

\textbf{Case 2.} There exists an $\omega^{(i)}$ (where $i \in \{2, \dots, k\}$) that is linearly independent of $\mathbf{1}$ and $\omega^{(1)}$.  Because $\omega^{(i)}$ cannot be expressed as a linear combination of $\mathbf{1}$ and $\omega^{(1)}$, inserting it into $\hat{F}$ introduces an additional orthogonal row. Following the similar logic of Case 1, we can prove that $\operatorname{rank}(\hat{F})$ equals $2k$. 

Since  the rank of $\hat{F}$ is equal to $2k$ and $\hat{F} = T F^{\textup{aug}}$, thus the rank of $F^{\textup{aug}}$ is at least $2k$. As the column dimension of $F^{\textup{aug}}$ is $2k$, thus, we have the rank of $F^{\textup{aug}}$ is $2k$. 
\end{proof}

\begin{figure*}[t!]
    \centering
    \begin{subfigure}[b]{\textwidth}
        \centering
        \begin{minipage}[b]{0.32\textwidth}
            \centering
            \includegraphics[width=\textwidth]{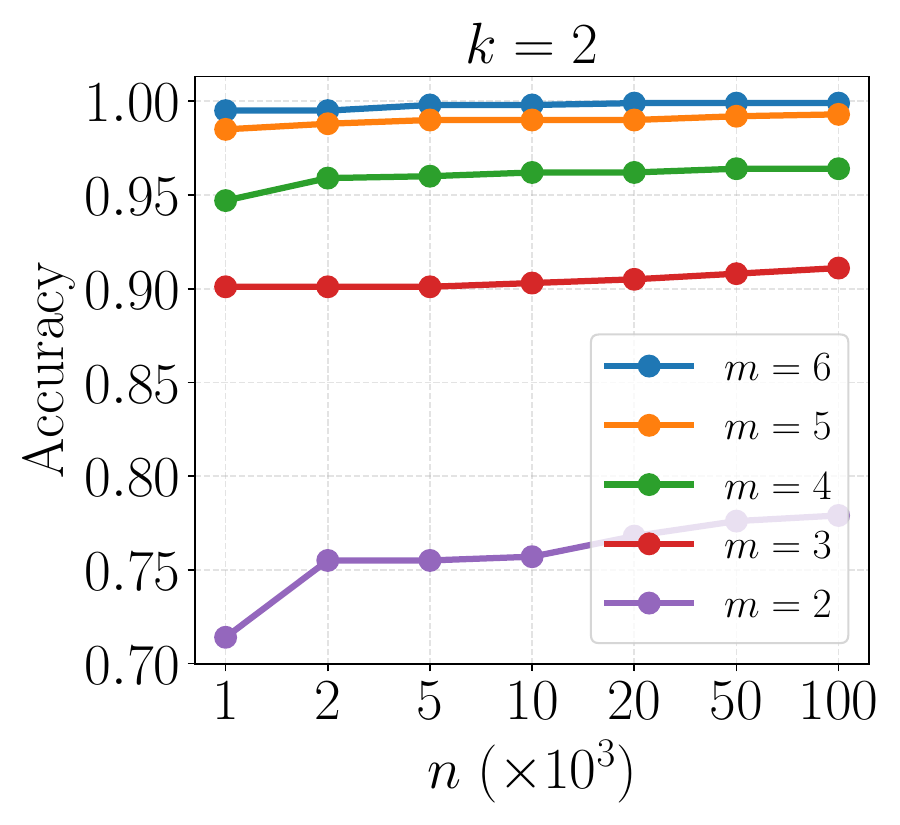}
        \end{minipage}\hfill
        \begin{minipage}[b]{0.32\textwidth}
            \centering
            \includegraphics[width=\textwidth]{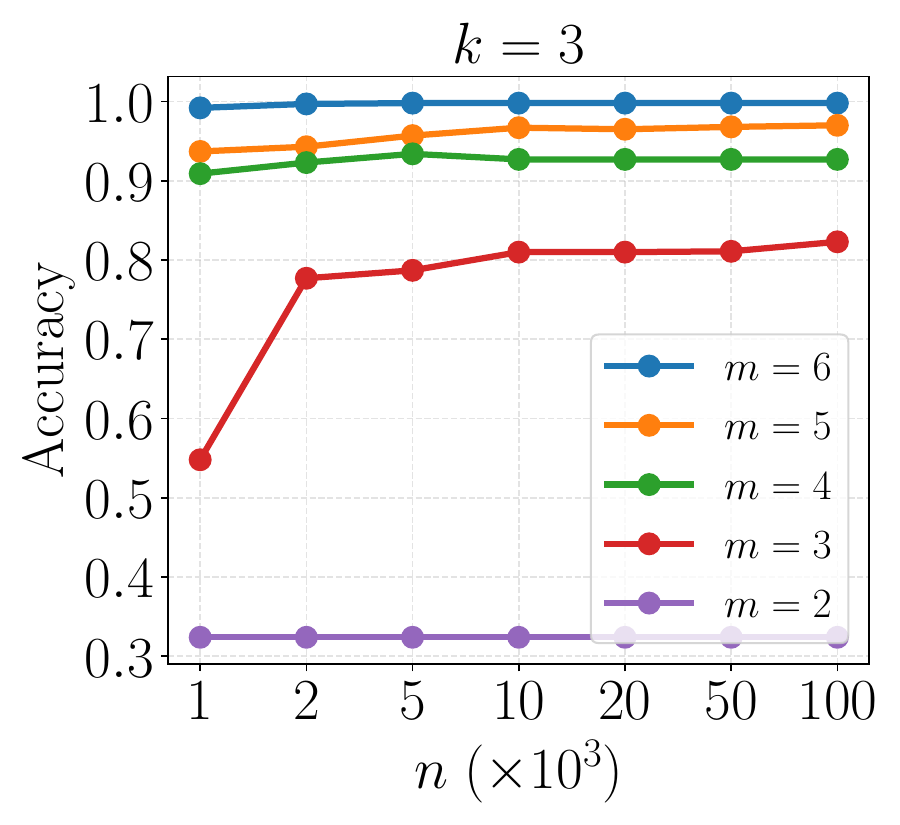}
        \end{minipage}\hfill
        \begin{minipage}[b]{0.32\textwidth}
            \centering
            \includegraphics[width=\textwidth]{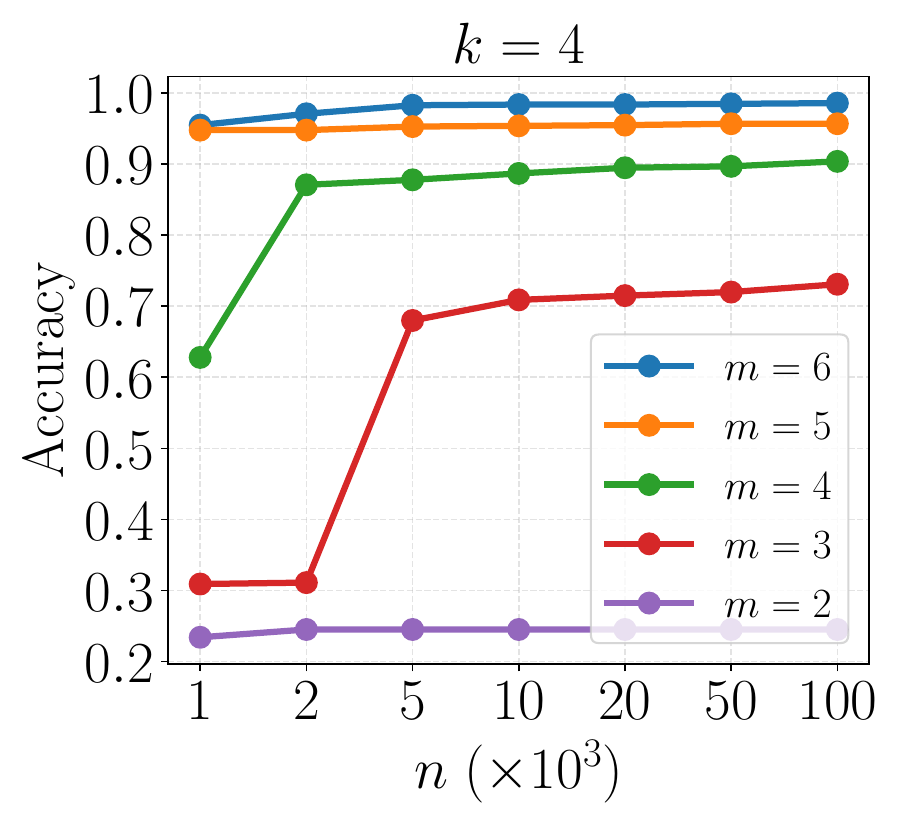}
        \end{minipage}
        \caption{{Varying the number of training rankings $n$ between $1{,}000$ and $100{,}000$. Using rankings of $m=2$ cannot recover the clusters, even for a large $n$.}}
        \label{fig_ranking_accuracy_varying_n}
    \end{subfigure}
 
    \vspace{0.15in}
 
    \begin{subfigure}[b]{\textwidth}
        \centering
        \begin{minipage}[b]{0.32\textwidth}
            \centering
            \includegraphics[width=\textwidth]{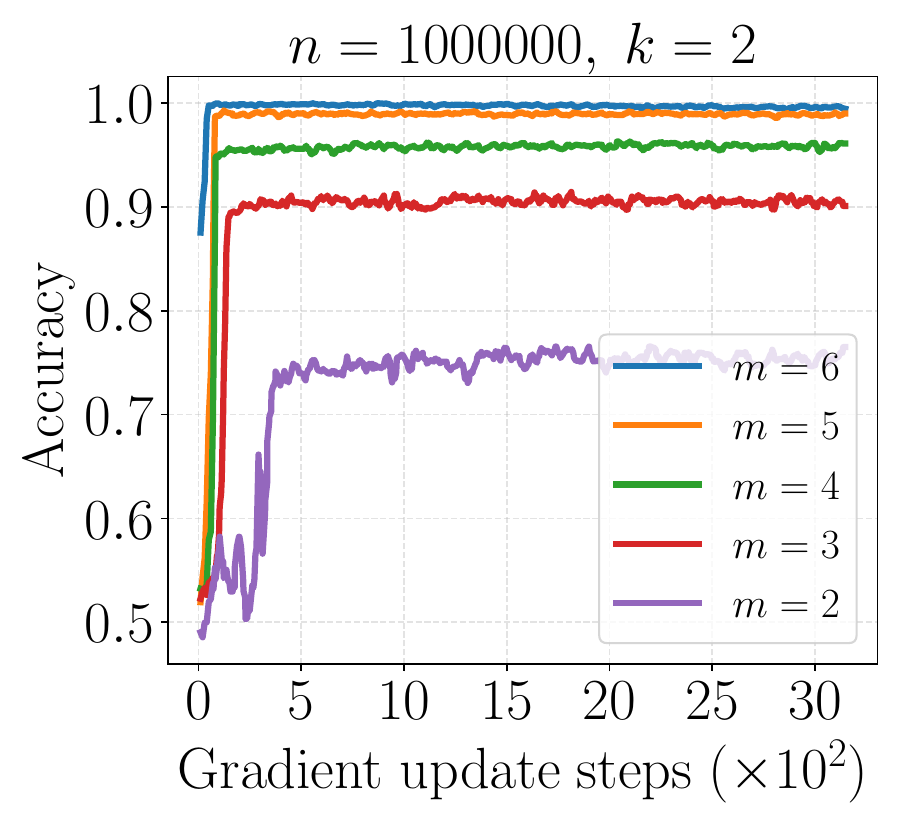}
        \end{minipage}\hfill
        \begin{minipage}[b]{0.32\textwidth}
            \centering
            \includegraphics[width=\textwidth]{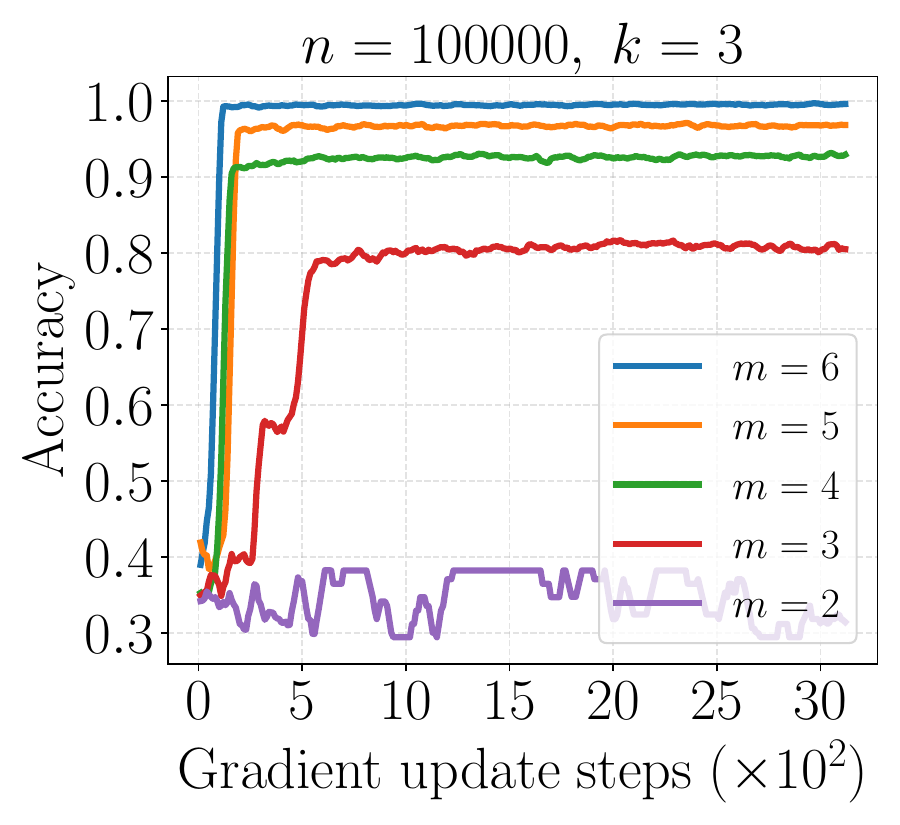}
        \end{minipage}\hfill
        \begin{minipage}[b]{0.32\textwidth}
            \centering
            \includegraphics[width=\textwidth]{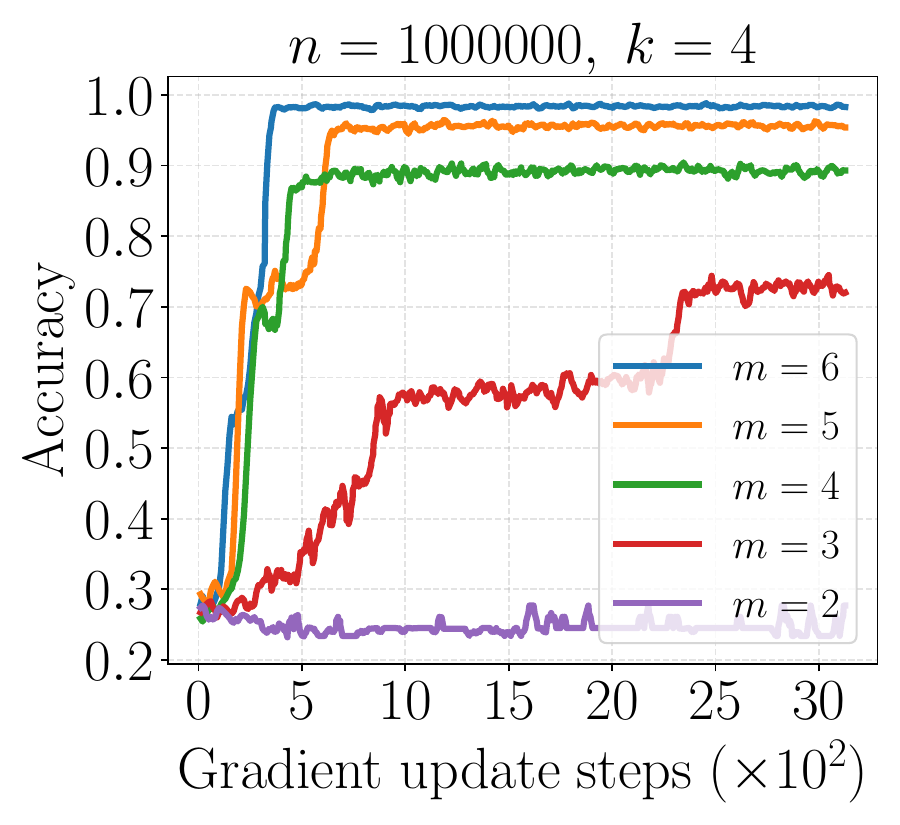}
        \end{minipage}
        \caption{{Varying the ranking size $m$ between $2, 3, 4, 5$ and $6$, at $n = 100{,}000$ rankings. Larger $m$ converges to higher accuracy.}}
        \label{fig_ranking_accuracy_varying_m}
    \end{subfigure}
    \caption{{Validation accuracy of the inferred cluster assignments on the synthetic data, for $k = 2, 3, 4$ clusters from left to right in each row. The two rows vary the two quantities that govern recovery: the amount of data in \subref{fig_ranking_accuracy_varying_n}, and the length of each ranking in \subref{fig_ranking_accuracy_varying_m}. }}
\label{fig_synthetic_combined}
\end{figure*}

\subsection{Extensions}\label{sec_extension}

We present our method using the Mallows model that defines the probability of a ranking based on its distance from an ideal ranking $\sigma_c$. Let $d_c(\cdot, \cdot)$ be a distance metric and $\phi_c(x_i) \in (0, 1)$ be a dispersion parameter dictating annotator variance. The probability of the ranking is:
\begin{equation}
\frac{\phi_c(x_i)^{d_c(\sigma_i, \sigma_c)}}{\prod_{j=1}^m \sum_{l=0}^{j-1} \phi_c(x_i)^l}.    
\end{equation}
We use the Reverse Major Index (RMJ) as the distance metric, which penalizes adjacent inversions, weighted by their rank position, where $d_c(\sigma_i, \sigma_c) 
= \sum_{t=1}^{m-1} (m - t) \cdot \mathbbm{1} \left\{ r_{\theta_c}(x_i, y_{i, \sigma_i(t)}) < r_{\theta_c}(x_i, y_{i, \sigma_i(t+1)}) \right\}$. 
The $(m-t)$ weighting imposes a larger penalty for misrankings at the top of the list. Substituting this probability into Equation \eqref{eq_mixture_of_ranking} forms the Mixture of Mallows models.

\section{Omitted Experiments}\label{sec_additional_exp}

\subsection{Synthetic data}

We now describe the details of the synthetic experiment to show the unidentifiability of the mixture of ranking models. 
We construct the dataset as follows: each response $y$ is represented as a list of $k$ integers, where $k$ is the total number of clusters. The ground-truth utility of $y$ for an annotator in cluster $c \in \{1, \dots, k\}$ is the $c$-th integer in the list. In other words, cluster $c$ ranks responses by its $c$-th integer. 
For example, consider three responses: ``[1, 2, 3]'', ``[2, 3, 1]'', ``[3, 1, 2]'', the rankings from clusters 1 and 2 are given as:
\begin{center}
    Cluster 1: [3, 1, 2] $\succ$ [2, 3, 1] $\succ$ [1, 2, 3]

    Cluster 2: [2, 3, 1] $\succ$ [1, 2, 3] $\succ$ [3, 1, 2]
\end{center}
We generate a dataset with $n$ annotators and $k$ clusters. We conduct experiments in three settings with $k$ between $2$, $3$, and $4$.   We vary the slate size $m \in \{2, 3, 4, 5, 6, 7, 8\}$. We train a Qwen-3-0.6B model using the EM algorithm and evaluate the cluster assignment accuracy on a held-out validation set.

\begin{table*}[h!]
\centering
\caption{Relative approximation error of $\epsilon_{x, y}$, tested with five language models of up to 34 billion parameters on the HelpSteer2 dataset.}\label{tab_linear_approximation_helpsteer}
\resizebox{0.8\textwidth}{!}
{\begin{tabular}{c|ccccccc}
\toprule
Distance & Qwen-0.6B & Gemma-2B & DeepSeek-7B & Llama-13B & CodeLlama-34B \\
\midrule
$0\%-5\%$   & $0.1_{\pm 0.0} \%$ & $0.1_{\pm 0.0}\%$ & $0.1_{\pm 0.0}\%$ & $0.1_{\pm 0.0}\%$ & $0.1_{\pm 0.0}\%$\\
$5\%-10\%$  & $0.7_{\pm 0.3} \%$ & $0.6_{\pm 0.3}\%$ & $0.6_{\pm 0.0}\%$ & $0.5_{\pm 0.2}\%$ & $0.5_{\pm 0.0}\%$\\
$10\%-15\%$ & $1.6_{\pm 0.1} \%$ & $1.5_{\pm 0.2}\%$ & $1.7_{\pm 0.1}\%$ & $1.6_{\pm 0.3}\%$ & $1.5_{\pm 0.4}\%$\\
$15\%-20\%$ & $3.0_{\pm 0.3} \%$ & $3.1_{\pm 0.1}\%$ & $3.4_{\pm 0.3}\%$ & $3.0_{\pm 0.5}\%$ & $3.1_{\pm 0.9}\%$\\
$20\%-25\%$ & $5.4_{\pm 0.4} \%$ & $5.4_{\pm 0.1}\%$ & $5.0_{\pm 0.3}\%$ & $5.1_{\pm 0.7}\%$ & $5.3_{\pm 0.2}\%$\\
\bottomrule
\end{tabular}}
\end{table*}

We show the validation accuracy of the cluster assignments when applying the algorithm to identify the underlying clusters with a ranking of different sizes $m$. 
In Figure \ref{fig_ranking_accuracy_varying_n}, we vary the number of rankings for training $n$ between $1,000$ and $100,000$. We found that when using ranking over $m=2$ items, the algorithm cannot fully recover the clusters, even when increasing the number of rankings $n$. For three (or four) underlying clusters, using $m=2$ achieves nearly the accuracy of random assignment (i.e., $1/k$).
In contrast, a larger value of $m$ leads to a higher accuracy. 

We further show the convergence of the algorithm in Figure \ref{fig_ranking_accuracy_varying_m}. In addition, when comparing the number of samples $n$ across settings with different numbers of clusters, we found that it requires more training samples with a large number of underlying clusters.

\begin{table*}[t!]
\centering
\caption{We report the clustering accuracies (\%) on the UltraFeedback dataset for each evaluation dimension and their average. Our approach is compared against baselines utilizing a single ranking model and mixtures of BT models. During evaluation, we infer the cluster membership of each test instance via the learned mixture model and apply the corresponding cluster-specific head to predict the final ranking. 
All results report the mean and standard deviation over three random seeds.}
\label{tab_cluster_results}
{\small
\begin{tabular}{lcccc|c}
\toprule
Clustering accuracy (\%) & Helpfulness & Honesty & Instruction following & Truthfulness & Average \\ \midrule
MiCRo & 33.4 $\pm$ 7.8 & 28.7 $\pm$ 4.5 & 22.8 $\pm$ 0.3 & 21.7 $\pm$ 5.9 & 26.6 $\pm$ 2.1 \\
MaxMin-RLHF & 27.5 $\pm$ 4.3 & 14.4 $\pm$ 2.8 & 35.3 $\pm$ 5.6 & 27.1 $\pm$ 0.5 & 27.2 $\pm$ 0.4 \\
EM-DPO & 25.1 $\pm$ 1.9 & 22.7 $\pm$ 2.7 & 33.2 $\pm$ 5.4 & 23.2 $\pm$ 2.9 & 26.9 $\pm$ 2.1 \\
\midrule
\acronym{} (Alg. \ref{alg_em_mixture_pl})  & 68.4 $\pm$ 2.3 & 73.8 $\pm$ 0.4 & 68.4 $\pm$ 1.3 & 72.9 $\pm$ 3.1 & 70.9 $\pm$ 2.3 \\ 
\bottomrule
\end{tabular}}
\end{table*}

\subsection{Evaluation of approximation errors}

Table \ref{tab_linear_approximation_helpsteer} shows the relative approximation error of the gradient estimation method across five different language models when tested on the HelpSteer2 dataset. The models evaluated range from 0.6 billion up to 34 billion parameters, specifically including Qwen-0.6B, Gemma-2B, DeepSeek-7B, Llama-13B, and CodeLlama-34B. The error remains low across all model sizes, starting at 0.1\% for the closest distances and remaining between 5.0\% and 5.4\% for the largest distance bucket.

\begin{table*}[t!]
\centering
\caption{We report the clustering and ranking accuracies (\%) on the Persona dataset across each cluster. Our approach is compared against baselines utilizing a single ranking model and mixtures of BT models. During evaluation, we infer the cluster membership of each test instance via the learned mixture model and apply the corresponding cluster-specific head to predict the final ranking. Ranking accuracy is measured as the proportion of correctly predicted pairs over all pairs. All results report the mean and standard deviation over three random seeds.}
\label{tab_cluster_results_persona}
{\small
\begin{tabular}{lcccccc}
\toprule
 & Persona 1 & Persona 2 & Persona 3 & Persona 4 & Persona 5 & Persona 6 \\ 
\midrule
{Clustering accuracy (\%)}\\
\midrule
MiCRo & 11.7 $\pm$ 0.6 & 10.5 $\pm$ 0.3 & 21.7 $\pm$ 0.4 & 16.6 $\pm$ 1.7 & 15.0 $\pm$ 0.0  & 47.8 $\pm$ 1.3 \\
MaxMin-RLHF & 35.2 $\pm$ 1.9 & 15.7 $\pm$ 0.9 & 8.0 $\pm$ 0.7 & 8.3 $\pm$ 0.3 & 2.0 $\pm$ 0.0  & 30.4 $\pm$ 1.3 \\
EM-DPO & 58.8 $\pm$ 0.2 & 26.3 $\pm$ 1.2 & 8.7 $\pm$ 0.0 & 8.3 $\pm$ 0.3 & 25.0 $\pm$ 0.0 & 39.1 $\pm$ 1.3 \\
\acronym{} (Alg. \ref{alg_em_mixture_pl}) & 61.1 $\pm$ 1.1 & 60.0 $\pm$ 0.4 & 58.3 $\pm$ 0.3 & 50.0 $\pm$ 0.0 & 60.0 $\pm$ 0.5 & 58.3 $\pm$ 1.3 \\ 
\midrule
{Ranking accuracy (\%)} \\ 
\midrule 
DPO  & 52.9 $\pm$ 2.4 & 53.7 $\pm$ 0.8 & 53.3 $\pm$ 0.6 & 57.1 $\pm$ 0.4 & 54.2 $\pm$ 3.0 & 53.3 $\pm$ 1.6 \\
LiPO & 44.4 $\pm$ 0.4 & 60.0 $\pm$ 0.0 & 54.1 $\pm$ 0.6 & 58.3 $\pm$ 1.3 & 65.0 $\pm$ 0.1 & 58.3 $\pm$ 0.3 \\
MiCRo & 64.7 $\pm$ 1.1 & 52.6 $\pm$ 0.3 & 65.2 $\pm$ 1.2 & 50.0 $\pm$ 0.0 & 30.0 $\pm$ 0.0 & 56.5 $\pm$ 1.2 \\
MaxMin-RLHF & 52.9 $\pm$ 0.4 & 63.1 $\pm$ 1.6 & 56.5 $\pm$ 1.2 & 58.3 $\pm$ 1.3 & 35.0 $\pm$ 1.0 & 56.5 $\pm$ 2.0 \\
EM-DPO & 70.5 $\pm$ 0.9 & 52.6 $\pm$ 1.3 & 39.1 $\pm$ 2.3 & 58.3 $\pm$ 2.3 & 40.0 $\pm$ 2.0 & 56.5 $\pm$ 2.0 \\
\acronym{} (Alg. \ref{alg_em_mixture_pl})  & 83.3 $\pm$ 1.3 & 83.3 $\pm$ 2.3 & 80.0 $\pm$ 2.0 & 58.3 $\pm$ 0.3 & 80.0 $\pm$ 1.0 & 70.8 $\pm$ 0.3 \\
\midrule
\midrule
 & Persona 7 & Persona 8 & Persona 9 & Persona 10 & Persona 11 & Persona 12 \\ 
\midrule
{Clustering accuracy (\%)} \\
\midrule
MiCRo & 33.3 $\pm$ 0.2 & 14.2 $\pm$ 0.9 & 8.3 $\pm$ 0.3 & 25.0 $\pm$ 0.0 & 13.6 $\pm$ 0.4 & 16.6 $\pm$ 1.7 \\
MaxMin-RLHF & 11.1 $\pm$ 0.1 & 9.5 $\pm$ 0.2 & 20.8 $\pm$ 0.3 & 10.0 $\pm$ 0.0 & 22.7 $\pm$ 0.3 & 33.3 $\pm$ 1.3 \\
EM-DPO & 14.2 $\pm$ 0.6 & 2.4 $\pm$ 0.4 & 29.1 $\pm$ 0.7 & 20.0 $\pm$ 0.0 & 13.6 $\pm$ 0.4 & 27.7 $\pm$ 1.8 \\
\acronym{} (Alg. \ref{alg_em_mixture_pl}) & 50.0 $\pm$ 0.0 & 54.5 $\pm$ 0.4 & 57.6 $\pm$ 1.9 & 63.6 $\pm$ 0.3 & 66.6 $\pm$ 1.7 & 61.1 $\pm$ 1.1 \\ 
\midrule
{Ranking accuracy (\%)} \\ 
\midrule 
DPO  & 53.3 $\pm$ 3.6 & 52.9 $\pm$ 0.4 & 54.6 $\pm$ 1.2 & 54.6 $\pm$ 0.2 & 56.3 $\pm$ 2.0 & 57.5 $\pm$ 0.6 \\
LiPO & 55.5 $\pm$ 0.5 & 50.0 $\pm$ 0.7 & 42.3 $\pm$ 0.0 & 45.4 $\pm$ 0.5 & 66.6 $\pm$ 1.6 & 77.7 $\pm$ 0.7 \\
MiCRo & 38.8 $\pm$ 0.9 & 71.4 $\pm$ 1.3 & 62.5 $\pm$ 2.0 & 35.0 $\pm$ 0.6 & 45.4 $\pm$ 0.5 & 50.0 $\pm$ 2.0 \\
MaxMin-RLHF & 50.0 $\pm$ 1.0 & 52.3 $\pm$ 1.8 & 58.3 $\pm$ 0.3 & 40.0 $\pm$ 2.0 & 36.3 $\pm$ 1.6 & 38.8 $\pm$ 0.9 \\
EM-DPO & 61.1 $\pm$ 1.1 & 42.8 $\pm$ 1.6 & 45.8 $\pm$ 1.3 & 35.0 $\pm$ 0.5 & 36.3 $\pm$ 1.6 & 50.0 $\pm$ 2.0 \\
\acronym{} (Alg. \ref{alg_em_mixture_pl})  & 100.0 $\pm$ 0.0 & 63.6 $\pm$ 1.3 & 80.7 $\pm$ 1.6 & 63.6 $\pm$ 1.3 & 79.1 $\pm$ 1.6 & 66.6 $\pm$ 1.6 \\
\bottomrule
\end{tabular}}
\end{table*}

\subsection{Comparing fine-tuned model generation}

We evaluate the generation of our fine-tuned models on the UltraFeedback dataset. We compute a win rate against the given response ranked at the top, given a reference response in the dataset. For each test prompt, we generate responses from both models and score them using a reward model trained solely on the ground-truth rankings of that corresponding cluster. The win rate is the percentage of times the fine-tuned model's response achieves a higher reward score than the baseline's response.

We find that our method achieves an average win rate of 58\%, outperforming baselines using single ranking models by 7\% and mixture-of-BT baselines by 4.3\% across all evaluated clusters. This indicates that our method aligns the language model's generations with the distinct ranking distributions of each cluster. In contrast, because pairwise mixture models struggle to accurately identify the clusters, they fail to tailor their generations to the specific target cluster. 

\subsection{Additional computation and parameters}

Our algorithm generates additional responses prior to training, accounting for only a small fraction of the total runtime. Evaluated on a single A6000 GPU, response generation takes 1.1 GPU hours for the UltraFeedback dataset and 1.3 GPU hours for the Persona dataset, whereas model training requires 7.3 and 7.2 GPU hours, respectively. Overall, response generation consumes roughly 16\% of the total runtime. 

We use LoRA adapters to implement the mixture of PL models. Each PL model corresponds to a distinct set of LoRA adapters. Thus, a mixture of $k$ models requires $k$ sets of adapters on top of the base model. In our experiments, setting $k=4$ for the UltraFeedback dataset introduces 13.5\% additional parameters relative to the base model. For the Persona dataset, setting $k=12$ results in 40.5\% additional parameters.

\begin{table*}[t!]
\centering
\caption{We report the ablation study results of varying the sampling temperature during response generation on the UltraFeedback dataset. We compare the ranking accuracy across four criteria and their average across temperatures of 0.5, 1.0, 2.0, and 4.0. Ranking accuracy is measured as the proportion of correctly predicted pairs out of the six possible pairs per four-response ranking. All results report the mean and standard deviation over three random seeds.}
\label{tab_ablation_temperature}
{\small
\begin{tabular}{lccccc}
\toprule
Sampling Temperature & {Helpfulness} & {Honesty} & {Instruction-following} & {Truthfulness} & {Average} \\
\midrule
0.5 & $76.6 \pm 2.5$ & $69.7 \pm 1.6$ & $71.5 \pm 1.7$ & $75.8 \pm 1.8$ & $70.9 \pm 1.9$ \\
1.0 & $77.2 \pm 1.8$ & $72.8 \pm 2.9$ & $73.2 \pm 2.5$ & $68.9 \pm 1.3$ & $72.8 \pm 2.1$ \\
2.0 & $79.4 \pm 3.9$ & $73.0 \pm 1.3$ & $76.9 \pm 2.6$ & $70.5 \pm 1.3$ & $75.0 \pm 2.9$ \\
4.0 & $79.5 \pm 1.9$ & $72.9 \pm 2.1$ & $76.0 \pm 2.7$ & $71.7 \pm 1.8$ & $74.7 \pm 2.3$ \\
\bottomrule
\end{tabular}
}
\end{table*}

\begin{table*}[t!]
\centering
\caption{We report the average ranking accuracy on the Persona dataset across 12 demographic profiles when varying key hyperparameters: the number of clusters ($k$), the augmented ranking size ($m'$), and the number of anchors ($a$). We vary one parameter at a time while keeping the others fixed at their default values ($k=12, m'=26, a=6$). All results report the mean and standard deviation over three random seeds.}
\label{tab_hyperparameter_ablation}
\small
\begin{tabular}{lcccc}
\toprule
\multicolumn{5}{l}{{Varying Number of Clusters} (Fixed: $m'=26, a=6$)} \\
\midrule
Parameter & $k=4$ & $k=8$ & $k=12$ & $k=16$ \\
Accuracy & $65.2 \pm 1.6$ & $69.6 \pm 0.9$ & $76.4 \pm 0.6$ & $76.7 \pm 0.4$ \\
\midrule
\multicolumn{5}{l}{{Varying Augmented Ranking Size} (Fixed: $k=12, a=6$)} \\
\midrule
Parameter & $m'=12$ & $m'=20$ & $m'=26$ & $m'=32$ \\
Accuracy & $59.6 \pm 2.6$ & $65.1 \pm 1.2$ & $76.4 \pm 0.6$ & $76.8 \pm 0.8$ \\
\midrule
\multicolumn{5}{l}{{Varying Number of Anchors} (Fixed: $k=12, m'=26$)} \\
\midrule
Parameter & $a=2$ & $a=4$ & $a=6$ & $a=8$ \\
Accuracy & $63.8 \pm 1.6$ & $72.4 \pm 1.8$ & $76.4 \pm 0.6$ & $76.4 \pm 0.9$ \\
\bottomrule
\end{tabular}
\end{table*}

\begin{table*}[t!]
\centering
\begin{minipage}[t]{0.4\textwidth}
\centering
\caption{{Datasets used in our experiments. Recall that $m$ is the number of options and $k$ is the number of groups.}}
\label{tab_datasets_and_models}
{\small
\begin{tabular}{lccl}
\toprule
{Dataset} & $m$ & $k$ & Sources \\
\midrule
UltraFeedback & 4 & 4 & \href{https://huggingface.co/datasets/openbmb/UltraFeedback}{{openbmb/UltraFeedback}} \\
Persona & 2 & 12 & \href{https://huggingface.co/datasets/SynthLabsAI/PERSONA}{{SynthLabsAI/PERSONA}}\\
HelpSteer2 & 2 & 5 & \href{https://huggingface.co/datasets/nvidia/HelpSteer2}{{nvidia/HelpSteer2}}\\
\bottomrule
\end{tabular}}
\end{minipage}\hfill
\begin{minipage}[t]{0.5\textwidth}
\centering
\caption{{Base language models used in our experiments.}}
\label{tab_models_only}
{\small
\begin{tabular}{lll}
\toprule
{Models}  & {Sources} \\
\midrule
Qwen-3-0.6B & \href{https://huggingface.co/Qwen/Qwen3-0.6B}{{\small Qwen/Qwen3-0.6B}} \\
Gemma-2B & \href{https://huggingface.co/google/gemma-2-2b-it}{{\small google/gemma-2-2b-it}} \\
DeepSeek-7B & \href{https://huggingface.co/deepseek-ai/deepseek-llm-7b-base}{{\small deepseek-ai/deepseek-llm-7b-base}} \\
Llama-13B  & \href{https://huggingface.co/meta-llama/Llama-2-13b-chat-hf}{{\small meta-llama/Llama-2-13b-chat-hf}} \\
CodeLlama-34B & \href{https://huggingface.co/meta-llama/CodeLlama-34b-Instruct-hf}{{\small meta-llama/CodeLlama-34b-Instruct-hf}} \\
\bottomrule
\end{tabular}}
\end{minipage}
\end{table*}

\subsection{Complete comparison}

We provide the complete results corresponding to the experiments and ablation studies discussed in the main text. Table \ref{tab_cluster_results} presents the clustering and ranking accuracies broken down by individual evaluation dimensions and their averages.

Table \ref{tab_cluster_results_persona} reports the clustering and ranking accuracies across all twelve demographic profiles on the Persona dataset.

Table \ref{tab_ablation_temperature} reports the impact of varying the response generation sampling temperature on ranking accuracy across the evaluation criteria.

Table \ref{tab_hyperparameter_ablation} reports the results of our hyperparameter ablation study of varying the number of clusters, the augmented ranking size, and the number of anchors.

\subsection{Discussions}

\textbf{Datasets, model sizes, and compute cost.} We evaluate our approach across several datasets that exhibit multi-dimensional evaluations and diverse user demographics.  The details of the datasets, including their categories, ranking properties, and sources, are summarized in Table \ref{tab_datasets_and_models}.

We conduct experiments using a range of base language models with varying parameter sizes to validate our gradient estimation approach. The models and their parameter counts are detailed in Table \ref{tab_models_only}.

The total computational cost is described in Section \ref{sec_experiments} of this paper. We evaluate the runtime and memory cost on a machine with a single RTX-A6000 GPU and 48 CPUs.

{}

\end{document}